\newcommand{\A}{\mathcal{A}}
\newcommand{\X}{\mathcal{X}}
\newcommand{\D}{\mathcal{D}}
\newcommand{\Z}{\mathcal{Z}}
\newcommand{\R}{\mathbb{R}}
\newcommand\diag[1]{\text{diag}\brk*{#1}}
\newcommand\trace[1]{\text{trace}\brk*{#1}}
\renewcommand\abs[1]{\left|#1\right|}
\renewcommand\norm[1]{\left\lVert#1\right\rVert}
\DeclarePairedDelimiterX{\infdivx}[2]{(}{)}{%
  #1\;\delimsize|\delimsize|\;#2%
}
\newcommand{\kld}[2]{\ensuremath{D_{KL}\infdivx{#1}{#2}}}
\DeclareDocumentCommand \E { o m } {%
  \ensuremath{\mathbb{E}%
  \IfValueTF {#1} {%
    _{#1} \brk[s]2{ #2 }%
  }{%
    \left[ #2 \right]%
  }%
  }%
}
\newtheorem*{theorem*}{Theorem}
\newtheorem{lemma}{Lemma}
\newtheorem{corollary}{Corollary}
\newtheorem{definition}{Definition}
\newtheorem*{remark*}{Remark}
\newtheorem{proposition}{Proposition}
\title{Uncertainty Estimation Using Riemannian Model~Dynamics for Offline Reinforcement Learning}
\author{%
  Guy Tennenholtz\thanks{Correspondence to \texttt{guytenn@gmail.com}} \\
  Technion Institute of Technology \\
  \And
  Shie Mannor \\
  Technion Institute of Technology \& Nvidia Research
}
\begin{document}

\maketitle



\begin{abstract}
    Model-based offline reinforcement learning approaches generally rely on bounds of model error. Estimating these bounds is usually achieved through uncertainty estimation methods. In this work, we combine parametric and nonparametric methods for uncertainty estimation through a novel latent space based metric. In particular, we build upon recent advances in Riemannian geometry of generative models to construct a pullback metric of an encoder-decoder based forward model. Our proposed metric measures both the quality of out-of-distribution samples as well as the discrepancy of examples in the data. We leverage our method for uncertainty estimation in a pessimistic model-based framework, showing a significant improvement upon contemporary model-based offline approaches on continuous control and autonomous driving benchmarks. 
\end{abstract}

\section{Introduction}
\label{section: introduction}

Offline Reinforcement Learning (RL) \citep{levine2020offline}, a.k.a. batch-mode RL \citep{ernst2005tree,riedmiller2005neural,fonteneau2013batch}, involves learning a policy from data sampled by a potentially suboptimal policy. Offline RL seeks to {\em surpass} the average performance of the agents that generated the data. Traditional methodologies fall short in offline settings, causing overestimation of the return \citep{buckman2020importance,wang2020statistical,zanette2020exponential}. 

One approach to overcome this in model-based settings is to penalize the return in out of distribution (OOD) regions, as depicted in \Cref{fig: data manifold}. In this manner, the agent is constrained to stay ``near" areas of low model error, thereby limiting possible overestimation. However, reliable estimates of model error are key to the success of such methods.

Estimating model error in OOD regions can be achieved through uncertainty estimation \citep{yu2020mopo}. Methods of parametric uncertainty estimation such as bootstrap ensembles \citep{efron1982jackknife}, Monte Carlo Dropout \citep{gal2016dropout}, and randomized priors \citep{osband2018randomized}, may be susceptible to poor model specification and are most effective when dealing with large datasets. In contrast, nonparametric methods such as k-nearest neighbors (k-NN) \citep{villa2013reliability,fathabadi2021comparison} are beneficial in regions of limited data, yet require a proper metric to be used. 

We propose to combine parametric and nonparametric methods for uncertainty estimation. Particularly, we define a novel Riemmannian metric which captures the epistemic and aleatoric uncertainty of a generative parametric forward model. This distance metric is then applied to measure the average geodesic distance to the $k$-nearest neighbors in the data. We derive analytical expressions for our metric and provide an efficient manner to estimate it. We then demonstrate the effectiveness of our metric for penalizing an offline RL agent compared to contemporary approaches on continuous control and autonomous driving benchmarks. As we empirically show, common approaches, including statistical bootstrap ensemble or Euclidean distances in latent space, do not necessarily capture the underlying degree of error needed for model-based offline RL.

\section{Preliminaries}
\label{section: preliminaries}

\subsection{Offline Reinforcement Learning}
\label{section: offline rl}

We consider the standard Markov Decision Process (MDP) framework \citep{puterman2014markov} defined by the tuple $(\mathcal{S}, \A, r, P, \alpha)$, where $\mathcal{S}$ is the state space, $\A$ the action space, ${r:\mathcal{S}\times\A \mapsto [0, 1]}$ the reward function, ${P:\mathcal{S}\times\A\times\mathcal{S} \mapsto [0, 1]}$ the transition kernel, and ${\alpha \in (0, 1)}$ is the discount factor. 

In the online setting of reinforcement learning (RL), the environment initiates at some state $s_0 \sim \rho_0$. At any time step the environment is in a state $s \in \mathcal{S}$, an agent takes an action $a \in \A$ and receives a reward $r(s, a)$ from the environment as a result of this action. The environment transitions to state $s'$ according to the transition function $P(\cdot | s, a)$. The goal of online RL is to find a policy $\pi(a|s)$ that maximizes the expected discounted return
${
    v^\pi = \E[\pi]{\sum_{t=0}^\infty \alpha^t r(s_t, a_t) | s_0 \sim \rho_0}.
}
$

Unlike the online setting, the offline setup considers a dataset ${\D_n = \brk[c]*{s_i, a_i, r_i, s'_i}_{i=1}^n}$ of transitions generated by some unknown agents. The objective of offline RL is to find the best policy in the test environment (i.e., real MDP) given only access to the data generated by the unknown agents.


\subsection{Riemannian Manifolds}
\label{section: riemannian manifolds}

We define the Riemannian pullback metric, a fundamental component of our proposed method. We refer the reader to \citet{carmo1992riemannian} for further details on Riemannian geometry. 

We are interested in studying a smooth surface $M$ with a Riemannian metric $g$. A Riemannian metric is a smooth function that assigns a symmetric positive definite matrix to any point in $M$. At each point $z \in M$ a tangent space $T_zM$ specifies the pointing direction of vectors “along” the surface. 


\begin{definition}
    Let $M$ be a smooth manifold. A Riemannian metric $g$ on $M$ changes smoothly and defines a real scalar product on the tangent space $T_z M$ for any $z \in M$ as
    \begin{align*}
        g_z(x,y) = \brk[a]*{x, y}_{z} = \brk[a]*{x, G(z)y}, \quad x,y \in T_z M,
    \end{align*}
    where $G(z) \in \R^{d_z\times d_z}$ is the corresponding metric tensor. $(M, g)$ is called a Riemannian manifold.
\end{definition}

\begin{wrapfigure}[16]{R}{0pt}
\centering
\includegraphics[width=0.45\textwidth]{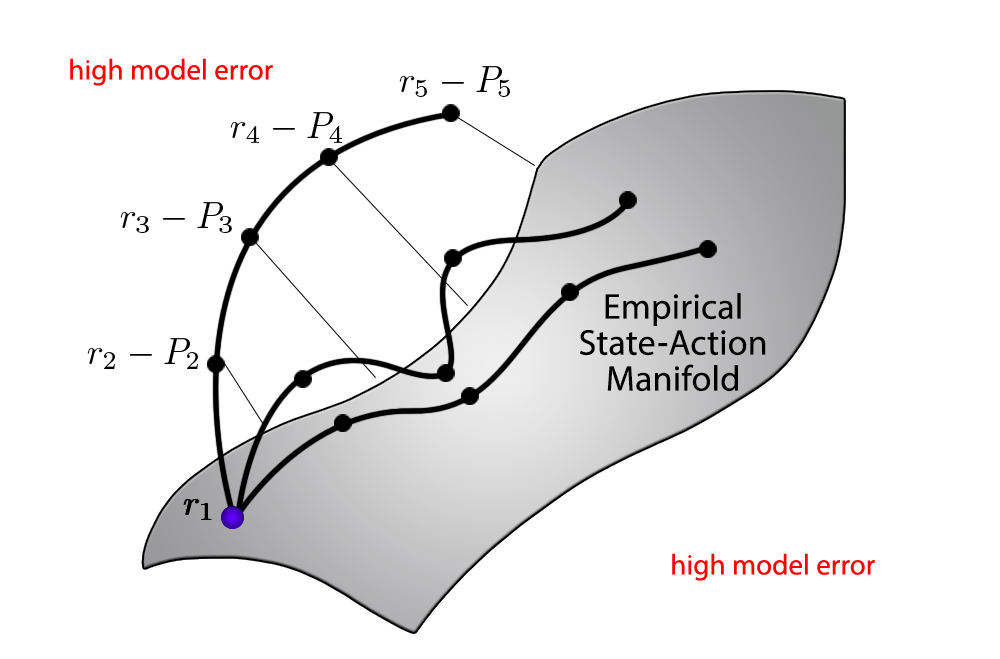}
\caption{\label{fig: data manifold} \footnotesize Illustration of a data manifold corresponding to model error. Curved lines represent trajectories at different stages of training. Agent incurs a penalty in areas of high model error. }
\end{wrapfigure}

The Riemannian metric enables us to easily define geodesic curves. Consider some differentiable mapping ${\gamma: [0, 1] \mapsto M \subseteq \R^{d_z}}$, such that $\gamma(0) = z_0, \gamma(1) = z_1$. The length of the curve $\gamma$ measured on $M$ is given by 
\begin{align}
    L(\gamma) = \int_0^1 \sqrt{\brk[a]*{\frac{\partial \gamma(t)}{\partial t}, G({\gamma(t)}) \frac{\partial \gamma(t)}{\partial t}}} dt.
    \label{eq: curve length}
\end{align}
The geodesic distance $d(z_1, z_2)$ between any two points $z_1,z_2 \in M$ is then the infimum length over all
curves $\gamma$ for which $\gamma(0) = z_0, \gamma(1) = z_1$. That is,
$
    d(z_1, z_2) = \inf_{\gamma} L(\gamma) \quad \text{s.t. } \gamma(0) = z_0, \gamma(1) = z_1.
$
The geodesic distance can be found by solving a system of nonlinear ordinary differential equations (ODEs) defined in the intrinsic coordinates \citep{carmo1992riemannian}.

\textbf{Pullback Metric.} Assume an ambient (observation) space $\mathcal{X}$ and its respective Riemannian manifold $(M_\X, g_\X)$. Learning $g_\X$ can be hard (e.g., learning the distance metric between images). Still, it may be captured through a low dimensional submanifold. As such, it is many times convenient to parameterize the surface $M_\X$ by a latent space $\Z = \R^{d_\Z}$ and a smooth function $f: \Z \mapsto \X$, where $\Z$ is a low dimensional latent embedding space. As learning the manifold $M_\X$ can be hard, we turn to learning the immersed low dimensional submanifold $M_\Z$ (for which the chart maps are trivial, since $\Z = \R^{d_\Z}$). Given a curve $\gamma:[0,1] \mapsto M_\Z$ we have that 
$
    \brk[a]*{\frac{\partial f(\gamma(t))}{\partial t}, G_\X({f(\gamma(t))}) \frac{\partial f(\gamma(t))}{\partial t}} 
    = 
    \brk[a]*{\frac{\partial \gamma(t)}{\partial t}, J_f^T(\gamma(t)) G_\X({f(\gamma(t))}) J_f(\gamma(t)) \frac{\partial \gamma(t)}{\partial t}},
$
where the Jacobian matrix $J_f(z) = \frac{\partial f}{\partial z} \in \R^{d_\X \times d_\Z}$ maps tangent vectors in $T{M_\Z}$ to tangent vectors in $T{M_\X}$. The induced metric is thus given by 
\begin{align}
    \label{eq: pullback metric}
    {G_f(z) = J_f(z)^T G_\X({f(z)}) J_f(z)}.
\end{align}
The metric $G_f$ is known as the \textit{pullback metric}, as it ``pulls back" the metric $G_\X$ on $\X$ back to $G_f$ via $f: \Z \mapsto \X$. The pullback metric captures the intrinsic geometry of the immersed submanifold while taking into account the ambient space~$\X$. The geodesic distance in ambient space is captured by geodesics in the latent space $\Z$, reducing the problem to learning the latent embedding space $\Z$ and the observation function $f$. Indeed, learning the latent space and observation function $f$ can be achieved through a encoder-decoder framework, such as a VAE \citep{arvanitidis2018latent}.


\section{Background: Penalty of Uncertainty for Offline Reinforcement Learning}
\label{section: metric}

A key element of model-based RL methods involves estimating a model $\hat{P}(s'|s,a)$ to construct a pessimistic MDP\footnote{Pessimism is a key element of offline RL algorithms \citep{jin2020pessimism}, limiting overestimation of a trained policy due to the distribution shift between the data and the trained policy.}.  This work builds upon MOPO, a recently proposed model-based offline RL framework \citep{yu2020mopo}). Particularly, we assume access to an approximate MDP $(\mathcal{S}, \A, \hat{r}, \hat{P}, \alpha)$ (e.g., trained by maximizing the likelihood of the data), and define a penalized MDP $(\mathcal{S}, \A, \tilde{r}, \hat{P}, \alpha)$, such that for all $s \in \mathcal{S}, a \in \A$,
$
    \tilde{r}(s,a) = \hat{r}(s,a) - \lambda c(P(\cdot | s, a), \hat{P}(\cdot | s, a)),
$
where $c$ penalizes the reward according to model error (e.g., the total variation distance) and $\lambda > 0$. The offline RL problem is then solved by executing an online algorithm in the reward-penalized MDP. Unfortunately, as $P(\cdot | s, a)$ is unknown, and can only be estimated from the data, $c(P(\cdot | s, a), \hat{P}(\cdot | s, a))$ cannot be calculated. Nevertheless, one can attempt to upper bound the distance, i.e., for some $U: \mathcal{S} \times \A \mapsto \R$,
$
    c(P(\cdot | s, a), \hat{P}(\cdot | s, a)) \leq U(s,a), \forall s\in \mathcal{S}, a \in \A.
$ In this work we propose to use a naturally induced metric of a variational forward model, which we show can introduce an effective penalty for offline RL. 
In \Cref{section: metric} we define this metric, and finally, we leverage it in \Cref{section: GELATO}. 

\section{Metrics of Uncertainty}
\label{section: metric}

As described in the previous section, our goal is to estimate model error in order to penalize the agent in out of distribution (OOD) regions. \citet{yu2020mopo} proposed to achieve this through bootstrap ensembles, an out of distribution uncertainty estimation technique. Alternatively, we propose to employ a well-known nonparametric approach for uncertainty estimation \citep{villa2013reliability,fathabadi2021comparison}, namely $k$-nearest neighbors ($k$-NN). Specifically, for any $s, a \in \mathcal{S} \times \A$, we estimate model error by
\begin{align}
\label{eq: nearest neighbors}
    U(s,a) = \frac{1}{k} \sum_{(s_i,a_i) \in \text{NN}_k(s,a)} d((s,a),(s_i, a_i)),
\end{align}
where ${d: \mathcal{S} \times \A \times \mathcal{S} \times \A \mapsto \R_+}$ is a distance metric, and $\text{NN}_k(s,a)$ is the set of $k$-nearest neighbors of $(s,a)$ in $\D_n$ according to the distance metric $d$.

A question arises: how to choose $d$? Using the Euclidean distance in ambient (state-action) space is usually a bad choice (e.g., the $\ell_2$ distance between natural images is not necessarily meaningful). Moreover, to correctly measure the error, model dynamics should be somehow taken into consideration. We therefore consider an alternative approach which leverages the latent space of a variational forward model, as described next.

\subsection{A Variational Latent Model of Dynamics}

We begin by modeling $\hat{P}(s'|s,a)$ using a generative latent model. Specifically, we consider a latent model which consists of an encoder $E : \mathcal{S} \times \mathcal{A} \mapsto \mathcal{B}(\Z)$ and a decoder $f_D : \mathcal{Z} \mapsto \mathcal{B}(\mathcal{S})$, where $\mathcal{B}(\X)$ is set of probability measures on the Borel sets of $\X$. While the encoder $E$ learns a latent representation of $s, a$, the decoder $f_D$ estimates the next state $s'$ according to $P(\cdot | s, a)$. This model corresponds to the decomposition $P(\cdot | s, a) = f_D(\cdot | E(s,a))$. Such a model can be trained by maximizing the Evidence Lower BOund (ELBO, \citet{kingma2013auto}) over the data. That is, given a prior $P(z)$, we model $E_\phi, f_{D,\theta}$ as parametric functions and maximize the ELBO,
$
    \max_{\theta, \phi} \E[E_\phi(z|s,a)]{\log f_{D,\theta} (s'|z) } - D_{KL}(E_\phi(z|s,a) || P(z))
$
We refer the reader to the appendix for an exhaustive overview of training VAEs by maximum likelihood and the ELBO.

Recall that we wish to find a good metric for estimating model error. Having learned a latent model for $\hat{P}(s'|s,a)$, its latent space $\Z$ can be used to define the metric $d$ in \Cref{eq: nearest neighbors}, i.e., the Euclidean distance between latent representations of state-action pairs. Unfortunately, as was previously shown \citep{arvanitidis2018latent}, latent codes in variational models contain sharp discontinuities, rendering Euclidean distances in latent space unreliable and inaccurate (as we will also demonstrate in our experiments). Instead, we propose to use the natural induced metric of our latent model, as described in the following subsection.


\begin{algorithm}[t!]
   \caption{GELATO: Geometrically Enriched LATent model for Offline reinforcement learning}
   \label{alg: gelato}
\begin{algorithmic}[1]
\small
   \STATE {\bfseries Input:} Offline dataset $\D_n$, RL algorithm
   \STATE Train variational latent forward model on dataset $\D_n$ by maximizing ELBO.
   \STATE Construct approximate MDP $(\mathcal{S}, \A, \hat{r}, \hat{P}, \alpha)$
   \STATE Use distance $d_\Z$ induced by pullback metric $G_{f_{D,U} \circ f_F}$ (\Cref{prop: metric comp}) to penalize reward ${\tilde{r}_d(s, a) = \hat{r}(s,a) - \lambda U(s,a)}$ where ${U(s,a) = \sum_{(s_i,a_i) \in \text{NN}_k(s,a)} d_\Z\brk*{E(s,a), E(s_i, a_i)}}$
   \STATE Train RL algorithm over penalized MDP $(\mathcal{S}, \A, \tilde{r}_d, \hat{P}, \alpha)$
\end{algorithmic}
\end{algorithm}

\subsection{The Pullback Metric of Model Dynamics}

In this part we define the metric $d$ that will be used to approximate model error in \Cref{eq: nearest neighbors}. Specifically, we consider a Riemannian submanifold defined by a latent space $\Z$ and observation function $f$, which induces minimum energy in the ambient space. We will later choose $\Z$ to be the latent space of our variational model (i.e., encoded state-action) and $f$ to be the decoder function $f_D$ of next state transitions. We define the distance metric formally below.


\begin{definition}
\label{definition: distance}
We define a Riemannian submanifold $(\mathcal{M}_\Z, g_\Z)$ by a differential function $f:\Z \mapsto \mathcal{S}$ and latent space $\Z$ such that 
\begin{align*}
    d_\Z(z_1, z_2) = \inf_\gamma \int_0^1 \norm{\frac{\partial f(\gamma(t))}{\partial t}} dt
    \quad
    \text{s.t. } \gamma(0) = z_1, \gamma(1) = z_2.
\end{align*}
\end{definition}

A similar metric has been used in previous work on generative latent models \citep{chen2018metrics,arvanitidis2018latent}. By choosing $f$ to be the decoder function $f_D$, latent codes that are close w.r.t. $d_\Z$ induce curves of minimal energy in the ambient observation space (i.e., next state). This metric is closely related to the pullback metric (see \Cref{section: riemannian manifolds}), as shown by the following proposition.
\begin{proposition}
\label{prop: metric}
    Let $(\mathcal{M}_\Z, g_\Z)$ as defined above. Then $G_f(z) = J^T_f(z) J_f(z)$, for any $z \in \Z$.
\end{proposition}

Indeed, \Cref{prop: metric} shows us that $G_f$ is a pullback metric. Particularly $\Z$ and $J_f$ define the structure of the ambient observation space $\mathcal{X}$ (in our case, next state transitions). 

By choosing $f$ to be the decoder function $f_D$, the metric $G_{f_D}$ becomes stochastic, complicating analysis. Instead, as proposed and analyzed in \citet{arvanitidis2018latent}, we use the expected pullback metric $\E{G_\Z}$ as an approximation of the underlying stochastic metric. Similar to previous work on variational models, we use a normally distributed decoder to define the output. Using \Cref{prop: metric}, we have the following result (see Appendix for proof).

\begin{restatable}{theorem}{metricdecoderprop}[\citet{arvanitidis2018latent}]
\label{prop: decoder metric} Assume ${f_D(\cdot | z) \sim \mathcal{N}(\mu(z), \sigma(z) I)}$. Then
    \begin{align}
        \E[f_D(\cdot | z)]{G_{f_D}(z)} 
        =
        G_\mu(z) + G_{\sigma}(z),
        \label{eq: decoder metric}
    \end{align}
    where $G_\mu(z) = J_{\mu}^T(z) J_{\mu}(z)$ and $G_\sigma(z) = J_{\sigma}^T(z)J_{\sigma}(z)$.
\end{restatable}

Given an embedded latent space $\Z$, the expected metric in \Cref{eq: decoder metric} gives us a sense of the topology of the latent space manifold induced by $f_D$. The terms $G_\mu = J_{\mu}^T J_{\mu}$ and $G_\sigma = J_{\sigma}^TJ_{\sigma}$ are in fact the induced pullback metrics of $\mu$ and $\sigma$, respectively.

\subsection{Capturing Epistemic and Aleatoric Uncertainty}
\label{section: metric epistemic aleatoric}

The previously proposed encoder-decoder model induces a metric which captures the structure of the learned dynamics. However, the decoder variance, $\sigma(z)$, does not differentiate between aleatoric uncertainty (environment dynamics) and epistemic uncertainty (missing data).

We propose two methods to enrich the metric in \Cref{eq: decoder metric} in order to achieve a better estimate of uncertainty. First, by using an ensemble of $M$ decoder functions $\brk[c]*{f_{D, i}}_{i=1}^M$ trained using standard bootstrap techniques \citep{efron1982jackknife}, we capture the traditional epistemic uncertainty of the decoder parameters. Second, to correctly distinguish epistemic and aleatoric uncertainty, we add a latent forward function to our previously proposed variational model. Specifically, our latent model consists of an encoder $E: \mathcal{S} \times \A \mapsto \mathcal{B}(\Z)$, forward model $f_F: \mathcal{Z} \mapsto \mathcal{B}(\mathcal{X})$ and decoder functions $f_{D, i}: \mathcal{X} \mapsto \mathcal{B}(\mathcal{S})$ such that $P(\cdot | s, a) = f_{D, i}(\cdot | x)$, and $x \sim f_F(\cdot | E(s,a))$. This structure enables us to capture the aleatoric uncertainty under the forward transition model $f_F$, and the epistemic uncertainty using the decoders $f_{D, i}$. That is, for $f_F(\cdot | z) \sim \mathcal{N}(\mu_F(z), \sigma_F(z) I)$, the variance model $\sigma_F(z)$ captures the stochasticity in model dynamics. This decomposition is also helpful whenever one wants to train an agent in latent space (e.g., for planning \citet{schrittwieser2020mastering})

Next, we turn to analyze the pullback metric induced by the proposed forward transition model. As both $f_F$ and $\brk[c]{f_{D, i}}_{i=1}^m$ are stochastic (capturing epistemic and aleatoric uncertainty), the result of \Cref{prop: decoder metric} cannot be directly applied to their composition. The following proposition provides an analytical expression for the expected pullback metric of a sampled next state and a uniformly sampled decoder (the proof is given in the appendix).

\begin{figure*}[t!]
\captionsetup[subfigure]{labelformat=empty}
\begin{subfigure}{.059 \textwidth}
    \centering
    \includegraphics[width=\textwidth]{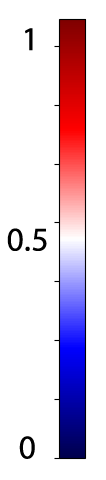}
    \caption{ }
\end{subfigure}
\begin{subfigure}{.37\textwidth}
    \centering
    \includegraphics[width=\textwidth]{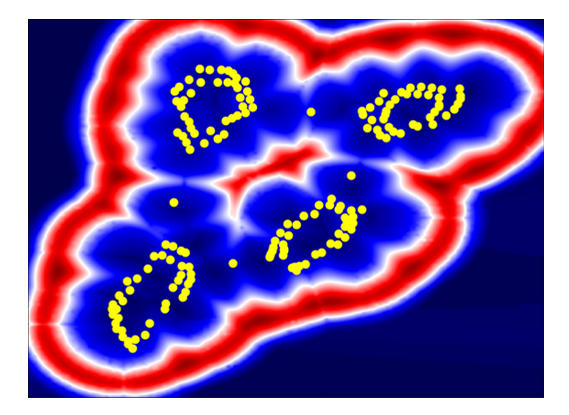}
    \caption{(a) $\sqrt{\det\brk*{G_{f_D}}}$}
\end{subfigure}
\begin{subfigure}{.275\textwidth}
    \centering
    \includegraphics[width=\textwidth]{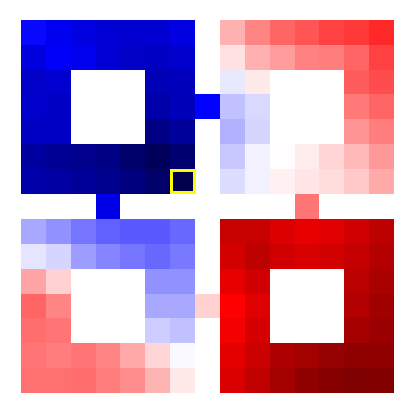}
    \caption{(b) Latent Geodesic Distance}
\end{subfigure}
\begin{subfigure}{.275\textwidth}
    \centering
    \includegraphics[width=\textwidth]{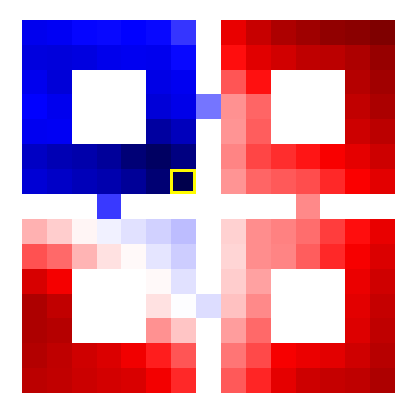}
    \caption{(c) Latent Euclidean Distance}
\end{subfigure}
\caption{\label{fig: four rooms} \small \textbf{(a)} The latent space (yellow markers) of the grid world environment and the geometric volume measure of the decoder-induced metric (background). \textbf{(b, c)} The geodesic distance of the decoder-induced submanifold and the Euclidean distance of latent states, as viewed in ambient space. All distances are calculated w.r.t. the yellow marked state. \textbf{Note}: colors in (a), which measure magnitude, are unrelated to colors in (b,c), which measure distance to the yellow marked state.}
\vskip -0.1in
\end{figure*} 

\begin{restatable}{theorem}{metriccompprop}
\label{prop: metric comp}
Assume ${f_F(\cdot | z) \sim \mathcal{N}(\mu_F(z), \sigma_F(z) I)}$, ${f_{D, i}(\cdot | x) \sim \mathcal{N}(\mu_D^i(x), \sigma_D^i(x) I)}$, ${U \sim \text{Unif}\brk[c]*{1, \hdots, M}}$. Then, the expected pullback metric of the composite function $(f_{D, U} \circ f_F)$ is given by
    \begin{align*}
        \E[P(f_{D, U} \circ f_F)]{G_{f_{D, U} \circ f_F}(z)} 
        =
        J_{\mu_F}^T(z) \overline{G}_{f_D}(z)J_{\mu_F}(z) 
        +
        J_{\sigma_F}^T(z) \diag{\overline{G}_{f_D}(z)}J_{\sigma_F}(z),
    \end{align*}
    where $\overline{G}_{f_D}(z)
    = 
    \frac{1}{M}\sum_{i=1}^M 
    \E[x \sim F(\cdot | z)]{ J_{\mu_D^i}^T(x) J_{\mu_D^i}(x) + J_{\sigma_D^i}^T(x) J_{\sigma_D^i}(x)}.
    $
\end{restatable}

Unlike the metric in \Cref{eq: decoder metric}, the composite metric distorts the decoder metric with Jacobian matrices of the forward model statistics. It takes into account both the aleatoric and epistemic uncertainty through the forward model as well as ensemble of decoders. As a special case we note the metric for the case of deterministic model dynamics.

\begin{corollary}
    Assume deterministic model dynamics, i.e., $x = f_F(z)$, and without loss of generality assume $f_F \equiv I$. Then, the expected pullback metric of \Cref{prop: metric comp} is given by
    $
    \E[]{G_{f_{D,U} \circ f_F}(z)}
    =
    \frac{1}{M}\sum_{i=1}^M J_{\mu_D^i}^T(z) J_{\mu_D^i}(z)
    + 
    J_{\sigma_D^i}^T(z)  J_{\sigma_D^i}(z).
    $
\end{corollary}

\subsection{GELATO: Geometrically Enriched LATent model for Offline reinforcement learning}
\label{section: GELATO}

Having defined our metric, we are now ready to leverage it in a model based offline RL framework. 
Specifically, provided a dataset ${\D_n = \brk[c]*{\brk*{s_i, a_i, r_i, s'_i}}}_{i=1}^n$ we train the variational latent forward model described in the previous section. 

\Cref{alg: gelato} presents GELATO, our proposed approach. In GELATO, we estimate model error by measuring the distance of a new sample to the data manifold. We construct the reward-penalized MDP for which the error acts as a pessimistic regularizer. Finally, we train an RL agent over the pessimistic MDP with transition $\hat{P}(\cdot | s, a)$ and reward $r(s,a) - \lambda U(s,a)$. By achieving an improved estimate for model error the model-based pessimistic approach can significantly improve performance, as shown in the following section.

\section{Experiments}
\label{section: experiments}

This section is dedicated to quantitatively and qualitatively understand the benefits of our proposed metric and method. We validate two principal claims: \textbf{(1) Our metric captures inherent characteristics of model dynamics.} We demonstrate this by visualizing state-action geodesics of a toy grid world problem and a multi-agent autonomous driving task. We show that curves of minimum energy in ambient space indeed capture intrinsic properties of the problem. \textbf{(2) Our metric provides an improved OOD uncertainty estimate for offline RL.} We compare the traditionally used bootstrap ensemble method to our approach, which leverages our pullback metric in a nonparametric nearest neighbors approach. We also compare our method to simple use of Euclidean distances in latent space. We run extensive experiments on continuous control and autonomous driving benchmarks. We show that our metric achieves significantly improved performance in tasks for which geodesics are non-euclidean.

\begin{figure*}[t!]
\centering
\includegraphics[width=0.85\textwidth]{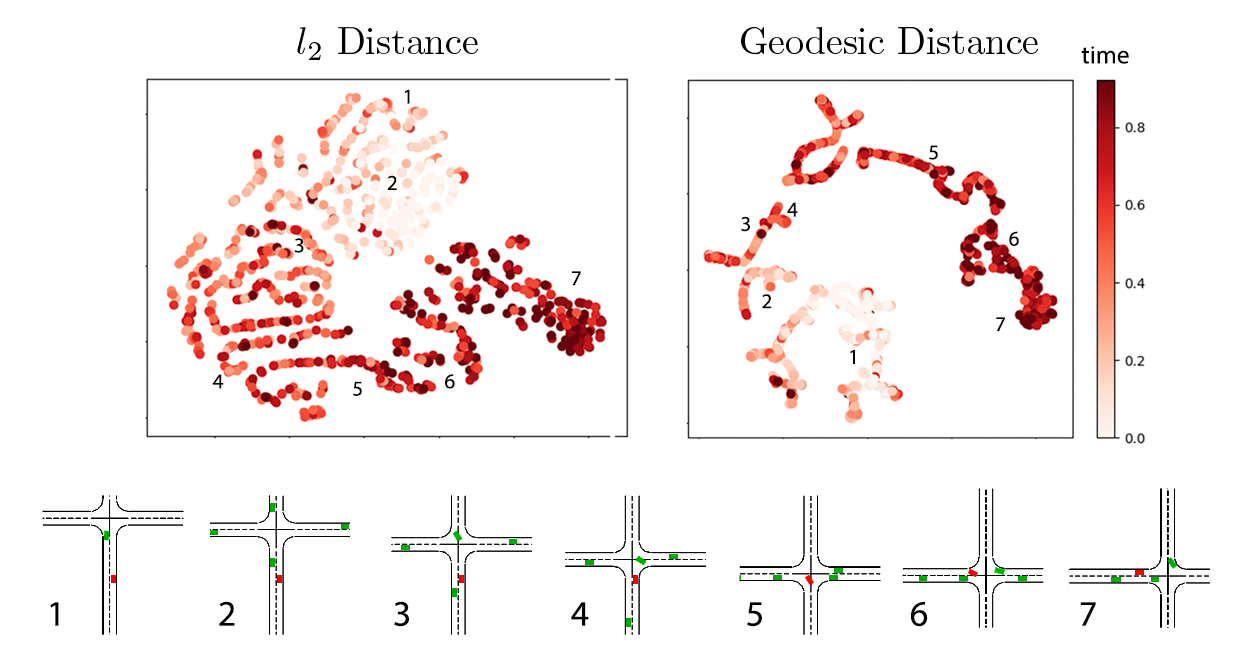}
\caption{\label{fig: highway embeddings} \small Plots show t-SNE embeddings generated in the intersection environment. Left plot depicts embeddings using Euclidean distances. Right plot depicts embeddings using geodesics distances which induce curves of minimum energy in ambient space. Colors correspond to the time in a trajectory (normalized w.r.t. longest trajectory). Numbering show mappings of a specific trajectory's states onto the embedded space as the controlled red car takes a left turn at an intersection (trajectory visualization shown under plots). }
\vskip -0.1in
\end{figure*} 

\subsection{Metric Visualization}
\label{sec: metric visualization}

\paragraph{Four Rooms.} To better understand the inherent structure of our metric, we constructed a grid-world environment for visualizing our proposed latent representation and metric. The ${15\times15}$ environment, as depicted in \Cref{fig: four rooms}, consists of four rooms, with impassable obstacles in their centers. The agent, residing at some position $(x, y) \in [-1,1]^2$ in the environment can take one of four actions: up, down, left, or right -- moving the agent $1, 2$ or $3$ steps (uniformly distributed) in that direction. We collected a dataset of $10000$ samples, taking random actions at random initializations of the environment. The ambient state space was represented by the position of the agent -- a vector of dimension~2, normalized to values in ${[-1, 1]}$. Finally, we trained a variational latent model with latent dimension ${d_\Z = 2}$. We used a standard encoder $z \sim \mathcal{N}(\mu_\theta(s), \sigma_\theta(s))$ and decoder $s' \sim \mathcal{N}(\mu_\phi(z), \sigma_\phi(z))$ represented by neural networks trained end-to-end using the evidence lower bound. We refer the reader to the appendix for an exhaustive description of the training procedure.

The latent space output of our model is depicted by yellow markers in \Cref{fig: four rooms}a. Indeed, the latent embedding consists of four distinctive clusters, structured in a similar manner as our grid-world environment. Interestingly, the distortion of the latent space accurately depicts an intuitive notion of distance between states. As such, rooms are distinctively separated, with fair distance between each cluster. States of pathways between rooms clearly separate the room clusters, forming a topology with four discernible bottlenecks.

In addition to the latent embedding, \Cref{fig: four rooms}a depicts the geometric volume measure
$\sqrt{\det\brk*{G_{f_D}}}$ of the trained pullback metric induced by $f_D$. This quantity demonstrates the effective geodesic distances between states in the decoder-induced submanifold. Indeed geodesics between data points to points outside of the data manifold (i.e., outside of the red region), attain high values as integrals over areas of high magnitude. In contrast, geodesics near the data manifold would low values. 

We visualize the decoder-induced geodesic distance and compare it to the latent Euclidean distance in Figures~\ref{fig: four rooms}b~and~\ref{fig: four rooms}c, respectively. The plots depict the normalized distances of all states to the state marked by a yellow square. Evidently, the geodesic distance captures a better notion of distance in the said environment, correctly exposing the ``land distance" in ambient space. As expected, states residing in the bottom-right room are farthest from the source state, as reaching them comprises of passing through at least two bottleneck states. In contrast, the latent Euclidean distance does not properly capture these geodesics, exhibiting a similar distribution of distances in other rooms. Nevertheless, both geodesic and Euclidean distances reveal the intrinsic topological structure of the environment, that of which is not captured by the extrinsic coordinates $(x,y) \in [-1, 1]^2$. Particularly, the state coordinates $(x,y)$ would wrongly assign short distances to states across impassible walls or obstacles,  i.e., measuring the ``air distance".


\begin{table*}[t!]
\caption{\label{table: results} \small Performance of GELATO and its variants in comparison to contemporary model-based methods on D4RL datasets. Scores correspond to the return, averaged over 5 seeds (standard deviation removed due to space constraints and is given in the appendix). Results for MOPO, MBPO, SAC, and imitation are taken from \citet{yu2020mopo}. Mean score of dataset added for reference. Bold scores show an improved score w.r.t other methods. }
\centering
\hspace*{-0.8cm}  
\begin{scriptsize}
\begin{tabular}{|c|ccc|ccc|ccc|}
\toprule
\multicolumn{1}{|c|}{} & \multicolumn{3}{c|}{\bf {\small Hopper}} & \multicolumn{3}{c|}{\bf {\small Walker2d} } & \multicolumn{3}{c|}{\bf {\small Halfcheetah} }\\
\midrule
\bf {\small Method}     & \bf {\small Rand}   & \bf {\small Med}   & \bf {\small Med-Expert} & \bf {\small Rand}   & \bf {\small Med}   & \bf {\small Med-Expert} & \bf {\small Rand}   & \bf {\small Med}   & \bf {\small Med-Expert} \\ \hline
Data Score & 299   & 1021    & 1849   & 1  & 498 & 1062   & -303  & 3945  & 8059    \\ \hline
GELATO (metric) & \bf {685 }   & \bf {1676 }   & 574    & \bf 412    & \bf {1269 }   & \bf {1515 }   &  2560    & \bf {5168 }   & \textbf{7989}   \\ \hline
GELATO ($\ell_2$) & 544    & 1320    & 815    & 388    & 312    & 1255   & 512   &  4096    & 7304    \\ \hline
MOPO (bootstrap) & \bf {677 }   & 1202   & 1063   & \bf {396 } & 518  & 1296   & \bf{4114} & 4974  & 7594    \\ \hline
MBPO & 444    & 457    & {2105 }  & 251   & 370    & 222   & 3527   & 3228   & 907   \\ \hline
SAC & 664   & 325   & 1850  & 120   & 27   & -2  & 3502   & -839   & -78  \\ \hline
Imitation & 615   & 1234   & \bf 3907   &  47 & 193 & 329   &  -41 & 4201 & 4164   \\ \hline 
\bottomrule
\end{tabular}
\end{scriptsize}
\end{table*}%

\paragraph{Intersection.} We visualized our metric in the intersection environment proposed in \citet{highway-env}. \Cref{fig: highway embeddings} compares the Euclidean and geodesic distances of a partially trained agent. Unlike the previous toy example, to visualize the inherent manifolds we used t-SNE \citep{van2008visualizing} projections computed with Euclidean distance and compared them to the projection computed with geodesic distance, i.e., curves of minimum energy in ambient space (\Cref{definition: distance}). Indeed, the geodesics captured the inherent structure of the environment, whereas Euclidean distances only managed to capture general clusters. These suggest that Euclidean distance might not be representative for measuring distance in latent space, as will also become evident by our experiments in the following subsections.

\subsection{Datasets and Implementation Details} We used D4RL \citep{fu2020d4rl} and the autonomous vehicle environments highway-env \citep{highway-env} as benchmarks for all of our experiments. We tested GELATO on three Mujoco \citep{todorov2012mujoco} environments (Hopper, Walker2d, Halfcheetah) on datasets generated by a single policy and a mixture of two policies. Specifically, we used datasets generated by a random agent (1M samples), a partially trained agent, i.e, medium agent (1M samples), and a mixture of partially trained and expert agents (2M samples). For autonomous driving, we tested GELATO on four environments (Highway, Roundabout, Intersection, Racetrack), on datasets containing five episodes generated by a partially trained agent. We also tested a faster ($\times 15$ speedup) variant of the Highway environment, as well as a harder instantiation of the Intersection environment in which the number of cars was tripled (further details can be found in the appendix).

We trained our variational latent model in two phases. First, the model was fully trained using a calibrated Gaussian decoder \citep{rybkin2020simple}. Specifically, a maximum-likelihood estimate of the variance was used
${
    \sigma^* = \text{MSE}(\mu, \hat{\mu}) \in \arg\max_\sigma \mathcal{N}(\hat{\mu} | \mu, \sigma^2 I).
}$
Then, in the second stage we fit the variance decoder networks. Hyperparameters for training are found in the appendix.

In order to practically estimate the geodesic distance in \Cref{alg: gelato}, we defined a parametric curve in latent space and used gradient descent to minimize the curve's energy. 
The resulting curve and pullback metric were then used to calculate the geodesic distance by a numerical estimate of the curve length (See Appendix for an exhaustive overview of the estimation method). 

We used FAISS \citep{johnson2019billion} for efficient GPU-based $k$-nearest neighbors calculation. We set $k=5$ neighbors for the penalized reward (\Cref{eq: nearest neighbors}). Finally, we used a variant of Soft Learning, as proposed by \citet{yu2020mopo} as our RL algorithm for the continuous control benchmarks, and PPO \citep{schulman2017proximal} for the autonomous driving tasks. All agents were trained for 1M steps (for continuous control benchmarks) and 350K steps (for the driving benchmarks), using a single GPU (RTX 2080), and averaged over 5 seeds (see Appendix for more details).

\begin{figure*}[t!]
\captionsetup[subfigure]{labelformat=empty}
\centering
\begin{subfigure}{.31\textwidth}
    \centering
    \includegraphics[width=\textwidth]{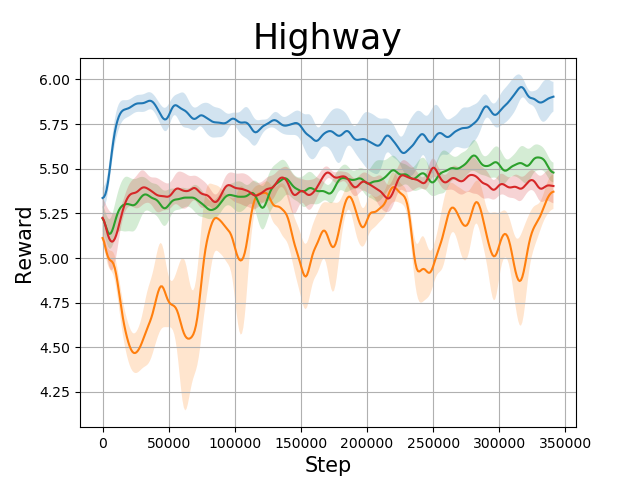}
\end{subfigure}
\begin{subfigure}{.31\textwidth}
    \centering
    \includegraphics[width=\textwidth]{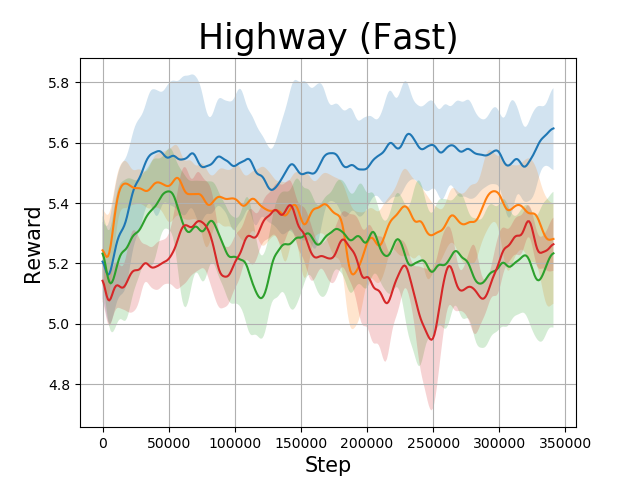}
\end{subfigure}
\begin{subfigure}{.31\textwidth}
    \centering
    \includegraphics[width=\textwidth]{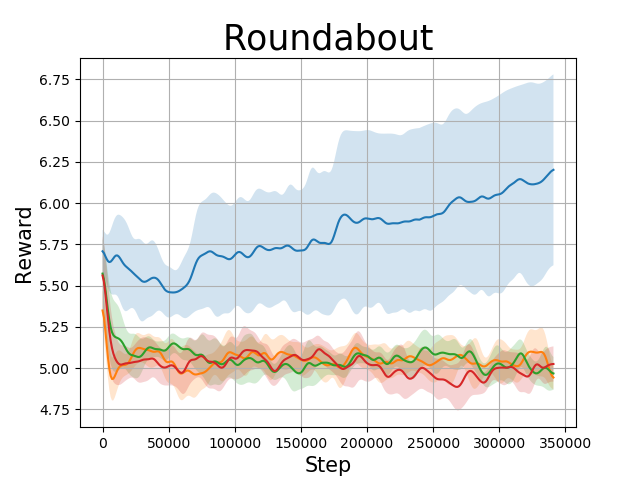}
\end{subfigure}
\begin{subfigure}{.31\textwidth}
    \centering
    \includegraphics[width=\textwidth]{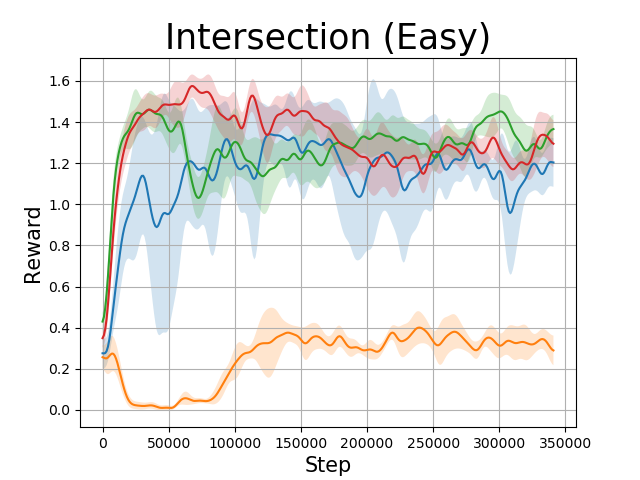}
\end{subfigure}
\begin{subfigure}{.31\textwidth}
    \centering
    \includegraphics[width=\textwidth]{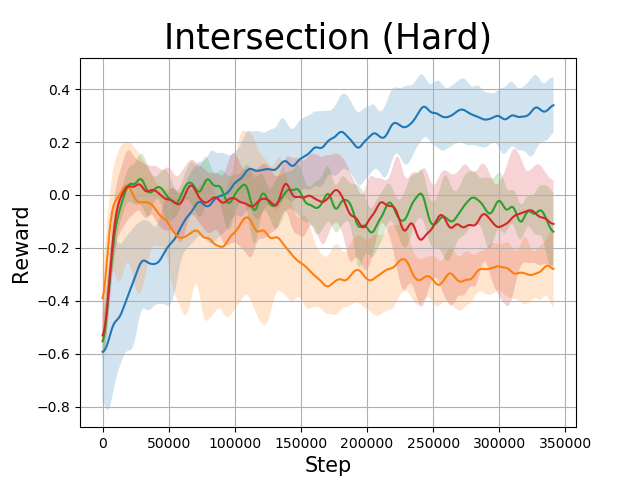}
\end{subfigure}
\begin{subfigure}{.31\textwidth}
    \centering
    \includegraphics[width=\textwidth]{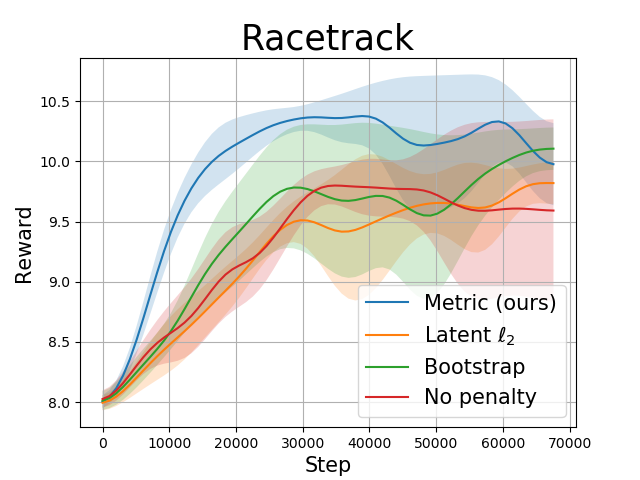}
\end{subfigure}
\caption{\label{fig: highway env results} \small Performance of GELATO with different uncertainty estimation methods on the highway-env benchmarks. Results suggest our induced semiparametric distance is an effective penalty for offline RL. }
\vskip -0.1in
\end{figure*} 

\subsection{Results}
\textbf{D4RL.} Results for the continuous control domains are shown in \Cref{table: results}. We performed experiments to analyze GELATO on various continuous control datasets. We compared GELATO to contemporary model-based offline RL approaches; namely, MOPO \citep{yu2020mopo} and MBPO \citep{janner2019trust}, as well as the standard baselines of SAC \citep{haarnoja2018soft} and imitation (behavioral cloning, \citet{bain1995framework,ross2011reduction}). 

We found performance increase on most domains, and most significantly in the medium domain, i.e., partially trained agent.  Since the medium dataset contained average behavior, combining the nonparametric nearest-neighbor uncertainty method with the bootstrap of decoders benefited the agent's overall performance. In addition, GELATO with the latent $\ell_2$ distance metric performed well on many of the benchmarks. We conjecture this is due to the inherent Euclidean nature of the continuous control benchmarks. Unlike the embedding for the autonomous driving benchmarks (\Cref{fig: highway embeddings}), we found the D4RL data to project similarly when $\ell_2$ and geodesic distances were used (we provide plots of these embeddings in the Appendix).

\textbf{Highway-Env.} \Cref{fig: highway env results} shows results for the autonomous driving benchmarks in highway-env. In contrast to the continuous control benchmarks, we found a significant improvement of our metric on the autonomous driving benchmarks compared to standard uncertainty estimation methods as well as the latent Euclidean distance. We credit this improvement to the non-euclidean nature of the environments, as previously described in \Cref{fig: highway embeddings}. While Euclidean distances were useful in the Mujoco environments, they performed distinctly worse in the autonomous driving environments.

Our results emphasize the importance of OOD uncertainty estimations methods in reinforcement learning on various types of datasets. While robotic control tasks provided useful insights, they did not capture the non-euclidean nature inherent in alternative tasks, such as autonomous driving.

\section{Related Work}

\textbf{Offline Reinforcement Learning.} The field of offline RL has recently received much attention as several algorithmic approaches were able to surpass standard off-policy algorithms. Value-based online algorithms do not perform well under highly off-policy batch data \citep{fujimoto2019off,kumar2019stabilizing,fu2019diagnosing,fedus2020revisiting,agarwal2020optimistic}, much due to issues with bootstrapping from out-of-distribution (OOD) samples. These issues become more prominent in the offline setting, as new samples cannot be acquired.

Several works on offline RL have shown improved performance on standard continuous control benchmarks \citep{laroche2019safe,kumar2019stabilizing,fujimoto2019off,chen2020bail,swazinna2020overcoming,kidambi2020morel,yu2020mopo}. This work focused on model-based approaches \citep{yu2020mopo,kidambi2020morel}, in which the agent is incentivized to remain close to areas of low uncertainty. Our work focused on controlling uncertainty estimation in high dimensional environments. Our methodology utilized recent success on the geometry of deep generative models \citep{arvanitidis2018latent,arvanitidis2020geometrically}, proposing an alternative approach to uncertainty estimation.

\textbf{Representation Learning.} Representation learning seeks to find an appropriate representation of data for performing a machine-learning task \citep{goodfellow2016deep}. Variational Auto Encoders \citep{kingma2013auto,rezende2014stochastic} have been a popular representation learning technique, particularly in unsupervised learning regimes \citep{chen2016variational,van2017neural,hsu2017unsupervised,serban2017hierarchical,engel2017latent,bojanowski2018optimizing,ding2020guided}, though also in supervised learning and reinforcement learning \citep{hausman2018learning,li2019multi,petangoda2019disentangled,hafner2019dream}. Particularly, variational models have been shown able to derive successful behaviors in high dimensional benchmarks \citep{hafner2020mastering}. 

Various representation techniques in reinforcement learning have also proposed to disentangle representation of both states \citep{engel2001learning,littman2002predictive,stooke2020decoupling,zhu2020masked}, and actions \citep{tennenholtz2019natural,chandak2019learning}. These allow for the abstraction of states and actions to significantly decrease computation requirements, by e.g., decreasing the effective dimensionality of the action space \citep{tennenholtz2019natural}. Unlike previous work, GELATO is focused on a semiparametric approach for uncertainty estimation, enhancing offline reinforcement learning performance.

\section{Discussion and Future Work}
\label{sec: negative results and future work}

This work presented a metric for model dynamics and its application to offline reinforcement learning. While our metric showed supportive evidence of improvement in model-based offline RL we note that it was significantly slower -- comparably, 5 times slower than using the decoder's variance for uncertainty estimation. The apparent slowdown in performance was mostly due to computation of the geodesic distance. Improvement in this area may utilize techniques for efficient geodesic estimation. 


We conclude by noting possible future applications of our work. In \Cref{sec: metric visualization} we demonstrated the inherent geometry our model had captured, its corresponding metric, and geodesics. Still, in this work we focused specifically on metrics related to the decoded state. In fact, a similar derivation to \Cref{prop: metric comp} could be applied to other modeled statistics, e.g., Q-values, rewards, future actions, and more. Each distinct statistic would induce its own unique metric w.r.t. its respective probability measure. Particularly, this concept may benefit a vast array of applications in continuous or large state and action spaces.


\bibliography{main.bib}

\begin{thebibliography}{68}
\providecommand{\natexlab}[1]{#1}
\providecommand{\url}[1]{\texttt{#1}}
\expandafter\ifx\csname urlstyle\endcsname\relax
  \providecommand{\doi}[1]{doi: #1}\else
  \providecommand{\doi}{doi: \begingroup \urlstyle{rm}\Url}\fi

\bibitem[Agarwal et~al.(2020)Agarwal, Schuurmans, and
  Norouzi]{agarwal2020optimistic}
Rishabh Agarwal, Dale Schuurmans, and Mohammad Norouzi.
\newblock An optimistic perspective on offline reinforcement learning.
\newblock In \emph{International Conference on Machine Learning}, pages
  104--114. PMLR, 2020.

\bibitem[Arvanitidis et~al.(2018)Arvanitidis, Hansen, and
  Hauberg]{arvanitidis2018latent}
Georgios Arvanitidis, Lars~Kai Hansen, and S{\o}ren Hauberg.
\newblock Latent space oddity: On the curvature of deep generative models.
\newblock In \emph{6th International Conference on Learning Representations,
  ICLR 2018}, 2018.

\bibitem[Arvanitidis et~al.(2020)Arvanitidis, Hauberg, and
  Sch{\"o}lkopf]{arvanitidis2020geometrically}
Georgios Arvanitidis, S{\o}ren Hauberg, and Bernhard Sch{\"o}lkopf.
\newblock Geometrically enriched latent spaces.
\newblock \emph{arXiv preprint arXiv:2008.00565}, 2020.

\bibitem[Bain and Sammut(1995)]{bain1995framework}
Michael Bain and Claude Sammut.
\newblock A framework for behavioural cloning.
\newblock In \emph{Machine Intelligence 15}, pages 103--129, 1995.

\bibitem[Bojanowski et~al.(2018)Bojanowski, Joulin, Lopez-Pas, and
  Szlam]{bojanowski2018optimizing}
Piotr Bojanowski, Armand Joulin, David Lopez-Pas, and Arthur Szlam.
\newblock Optimizing the latent space of generative networks.
\newblock In \emph{International Conference on Machine Learning}, pages
  600--609, 2018.

\bibitem[Buckman et~al.(2020)Buckman, Gelada, and
  Bellemare]{buckman2020importance}
Jacob Buckman, Carles Gelada, and Marc~G Bellemare.
\newblock The importance of pessimism in fixed-dataset policy optimization.
\newblock \emph{arXiv preprint arXiv:2009.06799}, 2020.

\bibitem[Carmo(1992)]{carmo1992riemannian}
Manfredo Perdigao~do Carmo.
\newblock \emph{Riemannian geometry}.
\newblock Birkh{\"a}user, 1992.

\bibitem[Chandak et~al.(2019)Chandak, Theocharous, Kostas, Jordan, and
  Thomas]{chandak2019learning}
Yash Chandak, Georgios Theocharous, James Kostas, Scott Jordan, and Philip
  Thomas.
\newblock Learning action representations for reinforcement learning.
\newblock In \emph{International Conference on Machine Learning}, pages
  941--950, 2019.

\bibitem[Chen et~al.(2018)Chen, Klushyn, Kurle, Jiang, Bayer, and
  Smagt]{chen2018metrics}
Nutan Chen, Alexej Klushyn, Richard Kurle, Xueyan Jiang, Justin Bayer, and
  Patrick Smagt.
\newblock Metrics for deep generative models.
\newblock In \emph{International Conference on Artificial Intelligence and
  Statistics}, pages 1540--1550. PMLR, 2018.

\bibitem[Chen et~al.(2019)Chen, Ferroni, Klushyn, Paraschos, Bayer, and van~der
  Smagt]{chen2019fast}
Nutan Chen, Francesco Ferroni, Alexej Klushyn, Alexandros Paraschos, Justin
  Bayer, and Patrick van~der Smagt.
\newblock Fast approximate geodesics for deep generative models.
\newblock In \emph{International Conference on Artificial Neural Networks},
  pages 554--566. Springer, 2019.

\bibitem[Chen et~al.(2020{\natexlab{a}})Chen, Klushyn, Ferroni, Bayer, and
  van~der Smagt]{chen2020learning}
Nutan Chen, Alexej Klushyn, Francesco Ferroni, Justin Bayer, and Patrick
  van~der Smagt.
\newblock Learning flat latent manifolds with vaes.
\newblock 2020{\natexlab{a}}.

\bibitem[Chen et~al.(2016)Chen, Kingma, Salimans, Duan, Dhariwal, Schulman,
  Sutskever, and Abbeel]{chen2016variational}
Xi~Chen, Diederik~P Kingma, Tim Salimans, Yan Duan, Prafulla Dhariwal, John
  Schulman, Ilya Sutskever, and Pieter Abbeel.
\newblock Variational lossy autoencoder.
\newblock \emph{arXiv preprint arXiv:1611.02731}, 2016.

\bibitem[Chen et~al.(2020{\natexlab{b}})Chen, Zhou, Wang, Wang, Wu, and
  Ross]{chen2020bail}
Xinyue Chen, Zijian Zhou, Zheng Wang, Che Wang, Yanqiu Wu, and Keith Ross.
\newblock Bail: Best-action imitation learning for batch deep reinforcement
  learning.
\newblock \emph{Advances in Neural Information Processing Systems}, 33,
  2020{\natexlab{b}}.

\bibitem[Ding et~al.(2020)Ding, Xu, Xu, Parmar, Yang, Welling, and
  Tu]{ding2020guided}
Zheng Ding, Yifan Xu, Weijian Xu, Gaurav Parmar, Yang Yang, Max Welling, and
  Zhuowen Tu.
\newblock Guided variational autoencoder for disentanglement learning.
\newblock In \emph{Proceedings of the IEEE/CVF Conference on Computer Vision
  and Pattern Recognition}, pages 7920--7929, 2020.

\bibitem[Dinh et~al.(2014)Dinh, Krueger, and Bengio]{dinh2014nice}
Laurent Dinh, David Krueger, and Yoshua Bengio.
\newblock Nice: Non-linear independent components estimation.
\newblock \emph{arXiv preprint arXiv:1410.8516}, 2014.

\bibitem[Efron(1982)]{efron1982jackknife}
Bradley Efron.
\newblock \emph{The jackknife, the bootstrap and other resampling plans}.
\newblock SIAM, 1982.

\bibitem[Engel et~al.(2017)Engel, Hoffman, and Roberts]{engel2017latent}
Jesse Engel, Matthew Hoffman, and Adam Roberts.
\newblock Latent constraints: Learning to generate conditionally from
  unconditional generative models.
\newblock \emph{arXiv preprint arXiv:1711.05772}, 2017.

\bibitem[Engel and Mannor(2001)]{engel2001learning}
Yaakov Engel and Shie Mannor.
\newblock Learning embedded maps of markov processes.
\newblock In \emph{Proceedings of the Eighteenth International Conference on
  Machine Learning}, pages 138--145, 2001.

\bibitem[Ernst et~al.(2005)Ernst, Geurts, and Wehenkel]{ernst2005tree}
Damien Ernst, Pierre Geurts, and Louis Wehenkel.
\newblock Tree-based batch mode reinforcement learning.
\newblock \emph{Journal of Machine Learning Research}, 6\penalty0
  (Apr):\penalty0 503--556, 2005.

\bibitem[Fathabadi et~al.(2021)Fathabadi, Seyedian, and
  Malekian]{fathabadi2021comparison}
Aboalhasan Fathabadi, Seyed~Morteza Seyedian, and Arash Malekian.
\newblock Comparison of bayesian, k-nearest neighbor and gaussian process
  regression methods for quantifying uncertainty of suspended sediment
  concentration prediction.
\newblock \emph{Science of The Total Environment}, page 151760, 2021.

\bibitem[Fedus et~al.(2020)Fedus, Ramachandran, Agarwal, Bengio, Larochelle,
  Rowland, and Dabney]{fedus2020revisiting}
William Fedus, Prajit Ramachandran, Rishabh Agarwal, Yoshua Bengio, Hugo
  Larochelle, Mark Rowland, and Will Dabney.
\newblock Revisiting fundamentals of experience replay.
\newblock In \emph{International Conference on Machine Learning}, pages
  3061--3071. PMLR, 2020.

\bibitem[Fonteneau et~al.(2013)Fonteneau, Murphy, Wehenkel, and
  Ernst]{fonteneau2013batch}
Raphael Fonteneau, Susan~A Murphy, Louis Wehenkel, and Damien Ernst.
\newblock Batch mode reinforcement learning based on the synthesis of
  artificial trajectories.
\newblock \emph{Annals of operations research}, 208\penalty0 (1):\penalty0
  383--416, 2013.

\bibitem[Fu et~al.(2019)Fu, Kumar, Soh, and Levine]{fu2019diagnosing}
Justin Fu, Aviral Kumar, Matthew Soh, and Sergey Levine.
\newblock Diagnosing bottlenecks in deep q-learning algorithms.
\newblock In \emph{International Conference on Machine Learning}, pages
  2021--2030, 2019.

\bibitem[Fu et~al.(2020)Fu, Kumar, Nachum, Tucker, and Levine]{fu2020d4rl}
Justin Fu, Aviral Kumar, Ofir Nachum, George Tucker, and Sergey Levine.
\newblock D4rl: Datasets for deep data-driven reinforcement learning.
\newblock \emph{arXiv preprint arXiv:2004.07219}, 2020.

\bibitem[Fujimoto et~al.(2019)Fujimoto, Meger, and Precup]{fujimoto2019off}
Scott Fujimoto, David Meger, and Doina Precup.
\newblock Off-policy deep reinforcement learning without exploration.
\newblock In \emph{International Conference on Machine Learning}, pages
  2052--2062. PMLR, 2019.

\bibitem[Gal and Ghahramani(2016)]{gal2016dropout}
Yarin Gal and Zoubin Ghahramani.
\newblock Dropout as a bayesian approximation: Representing model uncertainty
  in deep learning.
\newblock In \emph{international conference on machine learning}, pages
  1050--1059, 2016.

\bibitem[Goodfellow et~al.(2016)Goodfellow, Bengio, Courville, and
  Bengio]{goodfellow2016deep}
Ian Goodfellow, Yoshua Bengio, Aaron Courville, and Yoshua Bengio.
\newblock \emph{Deep learning}, volume~1.
\newblock MIT press Cambridge, 2016.

\bibitem[Haarnoja et~al.(2018)Haarnoja, Zhou, Abbeel, and
  Levine]{haarnoja2018soft}
Tuomas Haarnoja, Aurick Zhou, Pieter Abbeel, and Sergey Levine.
\newblock Soft actor-critic: Off-policy maximum entropy deep reinforcement
  learning with a stochastic actor.
\newblock In \emph{International Conference on Machine Learning}, pages
  1861--1870. PMLR, 2018.

\bibitem[Hafner et~al.(2019)Hafner, Lillicrap, Ba, and
  Norouzi]{hafner2019dream}
Danijar Hafner, Timothy Lillicrap, Jimmy Ba, and Mohammad Norouzi.
\newblock Dream to control: Learning behaviors by latent imagination.
\newblock In \emph{International Conference on Learning Representations}, 2019.

\bibitem[Hafner et~al.(2020)Hafner, Lillicrap, Norouzi, and
  Ba]{hafner2020mastering}
Danijar Hafner, Timothy Lillicrap, Mohammad Norouzi, and Jimmy Ba.
\newblock Mastering atari with discrete world models.
\newblock \emph{arXiv preprint arXiv:2010.02193}, 2020.

\bibitem[Hausman et~al.(2018)Hausman, Springenberg, Wang, Heess, and
  Riedmiller]{hausman2018learning}
Karol Hausman, Jost~Tobias Springenberg, Ziyu Wang, Nicolas Heess, and Martin
  Riedmiller.
\newblock Learning an embedding space for transferable robot skills.
\newblock In \emph{International Conference on Learning Representations}, 2018.

\bibitem[Hsu et~al.(2017)Hsu, Zhang, and Glass]{hsu2017unsupervised}
Wei-Ning Hsu, Yu~Zhang, and James Glass.
\newblock Unsupervised learning of disentangled and interpretable
  representations from sequential data.
\newblock In \emph{Advances in neural information processing systems}, pages
  1878--1889, 2017.

\bibitem[Janner et~al.(2019)Janner, Fu, Zhang, and Levine]{janner2019trust}
Michael Janner, Justin Fu, Marvin Zhang, and Sergey Levine.
\newblock When to trust your model: Model-based policy optimization.
\newblock \emph{arXiv preprint arXiv:1906.08253}, 2019.

\bibitem[Jin et~al.(2020)Jin, Yang, and Wang]{jin2020pessimism}
Ying Jin, Zhuoran Yang, and Zhaoran Wang.
\newblock Is pessimism provably efficient for offline rl?
\newblock \emph{arXiv preprint arXiv:2012.15085}, 2020.

\bibitem[Johnson et~al.(2019)Johnson, Douze, and J{\'e}gou]{johnson2019billion}
Jeff Johnson, Matthijs Douze, and Herv{\'e} J{\'e}gou.
\newblock Billion-scale similarity search with gpus.
\newblock \emph{IEEE Transactions on Big Data}, 2019.

\bibitem[Kalatzis et~al.(2020)Kalatzis, Eklund, Arvanitidis, and
  Hauberg]{kalatzis2020variational}
Dimitris Kalatzis, David Eklund, Georgios Arvanitidis, and S{\o}ren Hauberg.
\newblock Variational autoencoders with riemannian brownian motion priors.
\newblock In \emph{Proceedings of the 37th International Conference on Machine
  Learning (ICML)}. PMLR, 2020.

\bibitem[Kidambi et~al.(2020)Kidambi, Rajeswaran, Netrapalli, and
  Joachims]{kidambi2020morel}
Rahul Kidambi, Aravind Rajeswaran, Praneeth Netrapalli, and Thorsten Joachims.
\newblock Morel: Model-based offline reinforcement learning.
\newblock \emph{arXiv preprint arXiv:2005.05951}, 2020.

\bibitem[Kingma and Ba(2014)]{kingma2014adam}
Diederik~P Kingma and Jimmy Ba.
\newblock Adam: A method for stochastic optimization.
\newblock \emph{arXiv preprint arXiv:1412.6980}, 2014.

\bibitem[Kingma and Welling(2013)]{kingma2013auto}
Diederik~P Kingma and Max Welling.
\newblock Auto-encoding variational bayes.
\newblock \emph{arXiv preprint arXiv:1312.6114}, 2013.

\bibitem[Klushyn et~al.(2019)Klushyn, Chen, Kurle, Cseke, and van~der
  Smagt]{klushyn2019learning}
Alexej Klushyn, Nutan Chen, Richard Kurle, Botond Cseke, and Patrick van~der
  Smagt.
\newblock Learning hierarchical priors in vaes.
\newblock In \emph{Advances in Neural Information Processing Systems}, pages
  2870--2879, 2019.

\bibitem[Kumar et~al.(2019)Kumar, Fu, Soh, Tucker, and
  Levine]{kumar2019stabilizing}
Aviral Kumar, Justin Fu, Matthew Soh, George Tucker, and Sergey Levine.
\newblock Stabilizing off-policy q-learning via bootstrapping error reduction.
\newblock In \emph{Advances in Neural Information Processing Systems}, pages
  11784--11794, 2019.

\bibitem[Laroche et~al.(2019)Laroche, Trichelair, and
  Des~Combes]{laroche2019safe}
Romain Laroche, Paul Trichelair, and Remi~Tachet Des~Combes.
\newblock Safe policy improvement with baseline bootstrapping.
\newblock In \emph{International Conference on Machine Learning}, pages
  3652--3661. PMLR, 2019.

\bibitem[Leurent(2018)]{highway-env}
Edouard Leurent.
\newblock An environment for autonomous driving decision-making.
\newblock \url{https://github.com/eleurent/highway-env}, 2018.

\bibitem[Levine et~al.(2020)Levine, Kumar, Tucker, and Fu]{levine2020offline}
Sergey Levine, Aviral Kumar, George Tucker, and Justin Fu.
\newblock Offline reinforcement learning: Tutorial, review, and perspectives on
  open problems.
\newblock \emph{arXiv preprint arXiv:2005.01643}, 2020.

\bibitem[Li et~al.(2019)Li, Wu, Jun, and Ammar]{li2019multi}
Minne Li, Lisheng Wu, WANG Jun, and Haitham~Bou Ammar.
\newblock Multi-view reinforcement learning.
\newblock In \emph{Advances in neural information processing systems}, pages
  1420--1431, 2019.

\bibitem[Liang et~al.(2018)Liang, Liaw, Nishihara, Moritz, Fox, Goldberg,
  Gonzalez, Jordan, and Stoica]{liang2018rllib}
Eric Liang, Richard Liaw, Robert Nishihara, Philipp Moritz, Roy Fox, Ken
  Goldberg, Joseph Gonzalez, Michael Jordan, and Ion Stoica.
\newblock Rllib: Abstractions for distributed reinforcement learning.
\newblock In \emph{International Conference on Machine Learning}, pages
  3053--3062. PMLR, 2018.

\bibitem[Littman and Sutton(2002)]{littman2002predictive}
Michael~L Littman and Richard~S Sutton.
\newblock Predictive representations of state.
\newblock In \emph{Advances in neural information processing systems}, pages
  1555--1561, 2002.

\bibitem[Osband et~al.(2018)Osband, Aslanides, and
  Cassirer]{osband2018randomized}
Ian Osband, John Aslanides, and Albin Cassirer.
\newblock Randomized prior functions for deep reinforcement learning.
\newblock \emph{Advances in Neural Information Processing Systems}, 31, 2018.

\bibitem[Petangoda et~al.(2019)Petangoda, Pascual-Diaz, Adam, Vrancx, and
  Grau-Moya]{petangoda2019disentangled}
Janith~C Petangoda, Sergio Pascual-Diaz, Vincent Adam, Peter Vrancx, and Jordi
  Grau-Moya.
\newblock Disentangled skill embeddings for reinforcement learning.
\newblock \emph{arXiv preprint arXiv:1906.09223}, 2019.

\bibitem[Puterman(2014)]{puterman2014markov}
Martin~L Puterman.
\newblock \emph{Markov decision processes: discrete stochastic dynamic
  programming}.
\newblock John Wiley \& Sons, 2014.

\bibitem[Rezende et~al.(2014)Rezende, Mohamed, and
  Wierstra]{rezende2014stochastic}
Danilo~Jimenez Rezende, Shakir Mohamed, and Daan Wierstra.
\newblock Stochastic backpropagation and approximate inference in deep
  generative models.
\newblock \emph{arXiv preprint arXiv:1401.4082}, 2014.

\bibitem[Riedmiller(2005)]{riedmiller2005neural}
Martin Riedmiller.
\newblock Neural fitted q iteration--first experiences with a data efficient
  neural reinforcement learning method.
\newblock In \emph{European Conference on Machine Learning}, pages 317--328.
  Springer, 2005.

\bibitem[Ross et~al.(2011)Ross, Gordon, and Bagnell]{ross2011reduction}
St{\'e}phane Ross, Geoffrey Gordon, and Drew Bagnell.
\newblock A reduction of imitation learning and structured prediction to
  no-regret online learning.
\newblock In \emph{Proceedings of the fourteenth international conference on
  artificial intelligence and statistics}, pages 627--635. JMLR Workshop and
  Conference Proceedings, 2011.

\bibitem[Rybkin et~al.(2020)Rybkin, Daniilidis, and Levine]{rybkin2020simple}
Oleh Rybkin, Kostas Daniilidis, and Sergey Levine.
\newblock Simple and effective vae training with calibrated decoders.
\newblock \emph{arXiv preprint arXiv:2006.13202}, 2020.

\bibitem[Schrittwieser et~al.(2020)Schrittwieser, Antonoglou, Hubert, Simonyan,
  Sifre, Schmitt, Guez, Lockhart, Hassabis, Graepel,
  et~al.]{schrittwieser2020mastering}
Julian Schrittwieser, Ioannis Antonoglou, Thomas Hubert, Karen Simonyan,
  Laurent Sifre, Simon Schmitt, Arthur Guez, Edward Lockhart, Demis Hassabis,
  Thore Graepel, et~al.
\newblock Mastering atari, go, chess and shogi by planning with a learned
  model.
\newblock \emph{Nature}, 588\penalty0 (7839):\penalty0 604--609, 2020.

\bibitem[Schulman et~al.(2017)Schulman, Wolski, Dhariwal, Radford, and
  Klimov]{schulman2017proximal}
John Schulman, Filip Wolski, Prafulla Dhariwal, Alec Radford, and Oleg Klimov.
\newblock Proximal policy optimization algorithms.
\newblock \emph{arXiv preprint arXiv:1707.06347}, 2017.

\bibitem[Serban et~al.(2017)Serban, Sordoni, Lowe, Charlin, Pineau, Courville,
  and Bengio]{serban2017hierarchical}
Iulian Serban, Alessandro Sordoni, Ryan Lowe, Laurent Charlin, Joelle Pineau,
  Aaron Courville, and Yoshua Bengio.
\newblock A hierarchical latent variable encoder-decoder model for generating
  dialogues.
\newblock In \emph{Proceedings of the AAAI Conference on Artificial
  Intelligence}, volume~31, 2017.

\bibitem[Stooke et~al.(2020)Stooke, Lee, Abbeel, and
  Laskin]{stooke2020decoupling}
Adam Stooke, Kimin Lee, Pieter Abbeel, and Michael Laskin.
\newblock Decoupling representation learning from reinforcement learning.
\newblock \emph{arXiv preprint arXiv:2009.08319}, 2020.

\bibitem[Swazinna et~al.(2020)Swazinna, Udluft, and
  Runkler]{swazinna2020overcoming}
Phillip Swazinna, Steffen Udluft, and Thomas Runkler.
\newblock Overcoming model bias for robust offline deep reinforcement learning.
\newblock \emph{arXiv preprint arXiv:2008.05533}, 2020.

\bibitem[Tennenholtz and Mannor(2019)]{tennenholtz2019natural}
Guy Tennenholtz and Shie Mannor.
\newblock The natural language of actions.
\newblock In \emph{International Conference on Machine Learning}, pages
  6196--6205, 2019.

\bibitem[Todorov et~al.(2012)Todorov, Erez, and Tassa]{todorov2012mujoco}
Emanuel Todorov, Tom Erez, and Yuval Tassa.
\newblock Mujoco: A physics engine for model-based control.
\newblock In \emph{2012 IEEE/RSJ International Conference on Intelligent Robots
  and Systems}, pages 5026--5033. IEEE, 2012.

\bibitem[Van Den~Oord et~al.(2017)Van Den~Oord, Vinyals, et~al.]{van2017neural}
Aaron Van Den~Oord, Oriol Vinyals, et~al.
\newblock Neural discrete representation learning.
\newblock In \emph{Advances in Neural Information Processing Systems}, pages
  6306--6315, 2017.

\bibitem[Van~der Maaten and Hinton(2008)]{van2008visualizing}
Laurens Van~der Maaten and Geoffrey Hinton.
\newblock Visualizing data using t-sne.
\newblock \emph{Journal of machine learning research}, 9\penalty0 (11), 2008.

\bibitem[Villa~Medina et~al.(2013)]{villa2013reliability}
Joe~Luis Villa~Medina et~al.
\newblock \emph{Reliability of classification and prediction in k-nearest
  neighbours}.
\newblock PhD thesis, Universitat Rovira i Virgili, 2013.

\bibitem[Wang et~al.(2020)Wang, Foster, and Kakade]{wang2020statistical}
Ruosong Wang, Dean~P Foster, and Sham~M Kakade.
\newblock What are the statistical limits of offline rl with linear function
  approximation?
\newblock \emph{arXiv preprint arXiv:2010.11895}, 2020.

\bibitem[Yu et~al.(2020)Yu, Thomas, Yu, Ermon, Zou, Levine, Finn, and
  Ma]{yu2020mopo}
Tianhe Yu, Garrett Thomas, Lantao Yu, Stefano Ermon, James Zou, Sergey Levine,
  Chelsea Finn, and Tengyu Ma.
\newblock Mopo: Model-based offline policy optimization.
\newblock \emph{arXiv preprint arXiv:2005.13239}, 2020.

\bibitem[Zanette(2020)]{zanette2020exponential}
Andrea Zanette.
\newblock Exponential lower bounds for batch reinforcement learning: Batch rl
  can be exponentially harder than online rl.
\newblock \emph{arXiv preprint arXiv:2012.08005}, 2020.

\bibitem[Zhu et~al.(2020)Zhu, Xia, Wu, Deng, Zhou, Qin, and Li]{zhu2020masked}
Jinhua Zhu, Yingce Xia, Lijun Wu, Jiajun Deng, Wengang Zhou, Tao Qin, and
  Houqiang Li.
\newblock Masked contrastive representation learning for reinforcement
  learning.
\newblock \emph{arXiv preprint arXiv:2010.07470}, 2020.

\end{thebibliography}
\bibliographystyle{plainnat}


\newpage

\onecolumn
\section*{Appendix}

\begin{table*}[t!]
\caption{\label{table: results2} \small Results of GELATO as presented in \Cref{table: results} with added std for each run, averaged over 5 seeds.}
\centering
\hspace*{-2.5cm}  
\begin{scriptsize}
\begin{tabular}{|c|ccc|ccc|ccc|}
\toprule
\multicolumn{1}{|c|}{} & \multicolumn{3}{c|}{\bf {\small Hopper}} & \multicolumn{3}{c|}{\bf {\small Walker2d} } & \multicolumn{3}{c|}{\bf {\small Halfcheetah} }\\
\midrule
\bf {\small Method}     & \bf {\small Rand}   & \bf {\small Med}   & \bf {\small Med-Expert} & \bf {\small Rand}   & \bf {\small Med}   & \bf {\small Med-Expert} & \bf {\small Rand}   & \bf {\small Med}   & \bf {\small Med-Expert} \\ \hline
Data Score & $299 \pm 200$   & $1021 \pm 314$    & $1849 \pm 1560$   & $1 \pm 6$  & $498 \pm 807$ & $1062 \pm 1576$   & $-303 \pm 79$  & $3945 \pm 494$  & $8059 \pm 4204$    \\ \hline
GELATO (metric) & $\mathbf{685 \pm 15}$   & $\mathbf{1676 \pm 223}$   & $574 \pm 16$    & $\mathbf{412 \pm 85}$    & $\mathbf{1269 \pm 549}$   & $\mathbf {1515 \pm 379}$   &  $2560 \pm 240$    & $\mathbf {5168 \pm 849 }$   & $\mathbf{7989 \pm 2790}$   \\ \hline
GELATO ($\ell_2$) & 544 $\pm$ 29   & 1320 $\pm$ 423    & 815 $\pm$ 153    & 388 $\pm$ 49    & 312 $\pm$ 189   & 1255 $\pm$ 310  & 512 $\pm$ 71   &  4096 $\pm$ 585    & 7304 $\pm$ 3780    \\ \hline
MOPO & \bf {677 $\pm$ 13}   & 1202 $\pm$ 400   & 1063 $\pm$ 193   & \bf {396 $\pm$ 76} & 518 $\pm$ 560 & 1296 $\pm$ 374   & \bf{4114 $\pm$ 312} & \bf 4974 $\pm$ 200 & \bf {7594 $\pm$ 4741}   \\ \hline
MBPO & 444 $\pm$ 193   & 457 $\pm$ 106   & {2105 $\pm$ 1113}  & 251 $\pm$ 235   & 370 $\pm$ 221   & 222 $\pm$ 99  & 3527 $\pm$ 487   & 3228 $\pm$ 2832   & 907 $\pm$ 1185  \\ \hline
SAC & 664   & 325   & 1850  & 120   & 27   & -2  & 3502   & -839   & -78  \\ \hline
Imitation & 615   & 1234   & \bf 3907   &  47 & 193 & 329   &  -41 & 4201 & 4164   \\ \hline 
\bottomrule
\end{tabular}
\end{scriptsize}
\end{table*}%

\begin{figure*}[t!]
\centering
\includegraphics[width=0.9\textwidth]{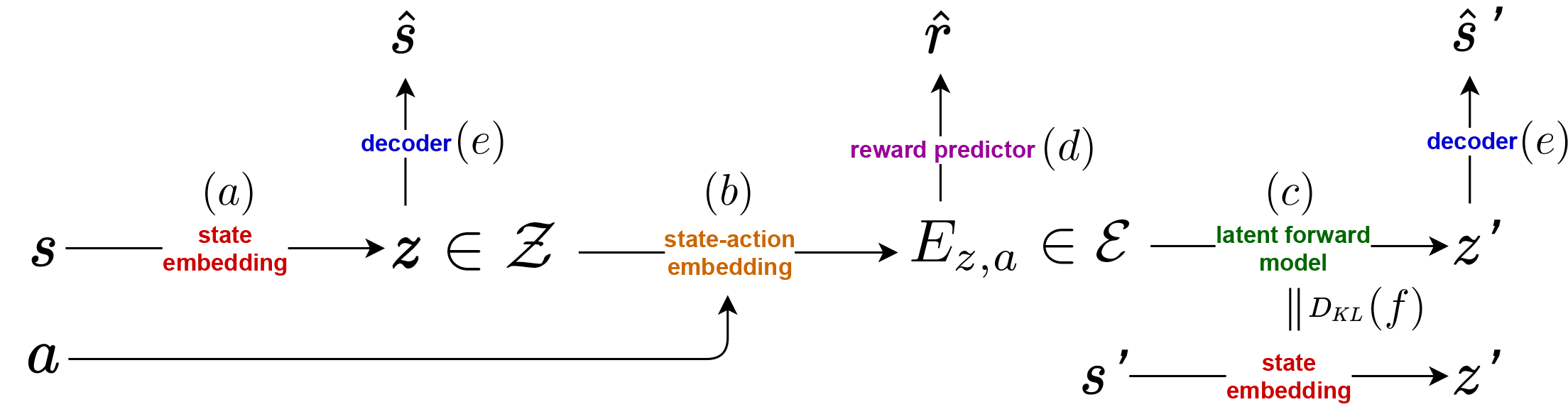}
\caption{\label{fig: graphical model} \small A graphical representation of our latent variable model. \textbf{(a)} The state $s$ is embedded via the state embedding function (i.e., approximate posterior) $z \sim q(\cdot | s)$. \textbf{(b)} The action and embedded state pass through an invertible embedding function $E$ to produce the state-action embedding $E_{z,a}$. \textbf{(c,d)} The state-action embedding is passed through a reward predictor and latent forward model, $\hat{r} \sim P(\cdot | E_{z,a})$ and $z' \sim P(\cdot | z, a)$, respectively. \textbf{(e)} The next latent state $z'$ is decoded back to observation space to generate $\hat{s}' \sim P(\cdot | z')$. \textbf{(f)} Finally, during training, the target state $s'$ is embedded and compared to $z'$ (by the KL-divergence term in \Cref{eq: elbo}), preserving the consistency of the latent space $\Z$. }
\end{figure*}

\section{Variational Latent Model}
\label{sec: model}

We begin by describing our variational forward model. The model, based on an encoder, latent forward function, and decoder framework assumes the underlying dynamics and reward are governed by a state-embedded latent space $\Z \subseteq \R^{d_\Z}$. The probability of a trajectory ${\tau = (s_0, a_0, r_0, \hdots, s_h, a_h, r_h)}$ is given by
\begin{align}
    P(\tau) = \int_{z_0, \hdots, z_h} &P(z_0)\prod_{i=0}^{h} P(s_i| z_i) \pi(a_i | s_i)P(r_i | E_{z_i, a_i})
    \prod_{j=1}^{h} P(z_j | E_{z_{j-1}, a_{j-1}}) dz_0 \hdots dz_h, \label{eq: likelihood}
\end{align}
where $E : \Z \times \A \mapsto \mathcal{E} \subseteq \R^{d_{\mathcal{E}}}$ is a deterministic, invertible embedding function which maps pairs $(z, a)$ to a state-action-embedded latent space $\mathcal{E}$. $E_{z,a}$ is thus a sufficient statistic of $(z, a)$. The proposed graphical model is depicted in \Cref{fig: graphical model}. We note that an extension to the partially observable setting replaces $s_t$ with $h_t = \brk*{s_0, a_0, \hdots, s_t}$, a sufficient statistic of the unknown state.

Maximizing the log-likelihood $\log P(\tau)$ is hard due to intractability of the integral in \Cref{eq: likelihood}. We therefore introduce the approximate posterior $q(z | s)$ and maximize the evidence lower bound.

To clear notations we define $E_{z_{-1}, a_{-1}} = 0$, so that we can rewrite the above expression as
\begin{align*}
     P(s_0, a_0, r_0, \hdots, s_h, a_h, r_h) 
     =
    \int_{z_0, \hdots, z_h} \prod_{i=0}^{h} P(s_i| z_i) \pi(a_i | s_i) P(r_i | E_{z_i, a_i}) P(z_j |E_{z_{-1}, a_{-1}})
\end{align*}
Introducing $q(z_i | s_i)$ we can write
\begin{align*}
    &\log
    \int_{z_0, \hdots, z_h} 
    \prod_{i=0}^{h} 
    \frac{q(z_i | s_i)}{q(z_i | s_i)}
    \prod_{i=0}^{h} 
    P(s_i| z_i)\pi(a_i | s_i)
    P(r_i | E_{z_i, a_i}) P(z_j |E_{z_{-1}, a_{-1}}) \\
    &\geq
    \int_{z_0, \hdots, z_h} 
    \prod_{i=0}^{h} 
    q(z_i | s_i)
    \brk*
    {
        \sum_{i=0}^{h} 
        \log
        \brk*
        {
            \frac{
                P(s_i| z_i)
                \pi(a_i | s_i)
                P(r_i | E_{z_i, a_i}) 
                P(z_j |E_{z_{-1}, a_{-1}})
            }
            {
                q(z_i | s_i)
            }
        }
    } \\
    &=
    \sum_{i=0}^{H}
    \E[q(z_i \mid s_i)]
    {
        \log
        \brk*{
            P(s_i \mid z_i)
            \pi(a_i \mid s_i)
            P(r_i | E_{z_i, a_i}) 
        }
    } \\
    &-
    \sum_{i=0}^{H-1}
    \E[q(z_i \mid s_i)]
    {
        \kld{q(z_{i+1} \mid s_{i+1})}{P(z_{i+1} | E_{z_i, a_i})}
    } \\
    &-
    \kld{q(z_0 \mid s_0)}{P(z_0)}.
\end{align*}

Hence,
\begin{align}
    &\sum_{i=0}^{h}
    \E[z_i \sim q(z_i \mid s_i)]
    {
        \log
        \brk*{
            P(s_i \mid z_i)
            \pi(a_i \mid s_i)
            P(r_i | E_{z_i, a_i})
        }
    } \nonumber \\
    &-
    \sum_{i=0}^{h-1}
    \E[z_i \sim q(z_i \mid s_i)]
    {
        \kld{q(z_{i+1} \mid s_{i+1})}{P(z_{i+1} | E_{z_i, a_i})}
    } \nonumber \\
    &-
    \kld{q(z_0 \mid s_0)}{P(z_0)}. \label{eq: elbo}
\end{align}

The distribution parameters of the approximate posterior $q(z | s)$, the likelihoods $P(s|z), \pi(a | s), P(r | E_{z,a})$, and the latent forward model $P(z' | E_{z,a})$ are represented by neural networks. The invertible embedding function $E$ is represented by an invertible neural network, e.g., affine coupling, commonly used for normalizing flows \citep{dinh2014nice}. Though various latent distributions have been proposed \citep{klushyn2019learning,kalatzis2020variational}, we found Gaussian parametric distributions to suffice for all of our model's functions. Particularly, we used two outputs for every distribution, representing the expectation $\mu$ and variance $\sigma$. All networks were trained end-to-end to maximize the evidence lower bound in \Cref{eq: elbo}.

\section{Specific Implementation Details}

\textbf{D4RL.} As a preprocessing step rewards were normalized to values between $[-1, 1]$. We trained our variational model with latent dimensions $\dim\brk*{\Z} = 32$ and ${\dim\brk*{\mathcal{E}} = \dim\brk*{\Z} + \dim\brk*{\A}}$. All domains were trained with the same hyperparameters. Specifically, we used a 2-layer Multi Layer Perceptron (MLP) to encode $\Z$, after which a 2-layer Affine Coupling (AC) \citep{dinh2014nice} was used to encode $\mathcal{E}$. We also used a 2-layer MLP for the forward, reward, and decoder models. All layers contained 256 hidden layers. 

The latent model was trained in two separate phases for 100k and 50k steps each by stochastic gradient descent and the ADAM optimizer \citep{kingma2014adam}. First, the model was fully trained using a calibrated Gaussian decoder \citep{rybkin2020simple}. Specifically, a maximum-likelihood estimate of the variance was used $\sigma^* = \text{MSE}(\mu, \hat{\mu}) \in \arg\max_\sigma \mathcal{N}(\hat{\mu} | \mu, \sigma^2 I)$. Finally, in the second stage we fit the variance decoder network. We found this process of to greatly improve convergence speed and accuracy, and mitigate posterior collapse. We used a minimum variance of $0.01$ for all of our stochastic models. 

\begin{figure}[t!]
\centering
\includegraphics[width=0.6\textwidth]{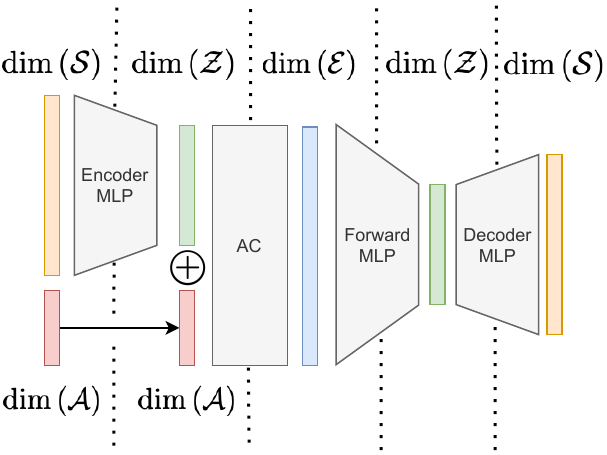}
\caption{\label{fig: arch} \small Latent model architecture (does not depict reward MLP). }
\end{figure} 

To further stabilize training we used a momentum encoder. Specifically we updated a target encoder as a slowly moving average of the weights from the learned encoder as
\begin{align*}
    \bar{\theta} \gets (1-\tau)\bar{\theta} + \tau \theta
\end{align*}
Hyperparameters for our variational model are summarized in \Cref{table: vae params}. The latent architecture is visualized in \Cref{fig: arch}. 

\begin{table}[h!]
\vskip 0.15in
\begin{center}
\begin{small}
\begin{sc}
\begin{tabular}{|cc|cc|}
\toprule
\midrule
\bf Parameter     & \bf Value   & \bf Parameter   & \bf Value \\ \hline
$\dim\brk*{\Z}$ & $32$   & Learning Rate  & $10^{-3}$   \\ \hline
$\dim\brk*{\mathcal{E}}$ & $32 + \dim\brk*{\A}$  & Batch Size  & $128$ (D4RL), $32$ (Highway-Env)   \\ \hline
Encoder MLP hidden & $256, 256$  & Target Update $\tau$  & 0.01   \\ \hline
Forward MLP hidden & $256, 256$  & Target Update Interval  & 1   \\ \hline
Decoder MLP hidden & $256, 256$   & Phase 1 Updates  & $100000$ (D4RL), $10000$ (Highway-Env)   \\ \hline
Reward MLP hidden  & $256, 256$   & Phase 2 Updates  & $50000$ (D4RL), $2000$ (Highway-Env)   \\ \hline
\bottomrule
\end{tabular}
\end{sc}
\end{small}
\end{center}
\caption{\label{table: vae params} Hyper parameters for variational model }
\vskip -0.1in
\end{table}%

\begin{figure}[t!]
\centering
\includegraphics[width=0.6\textwidth]{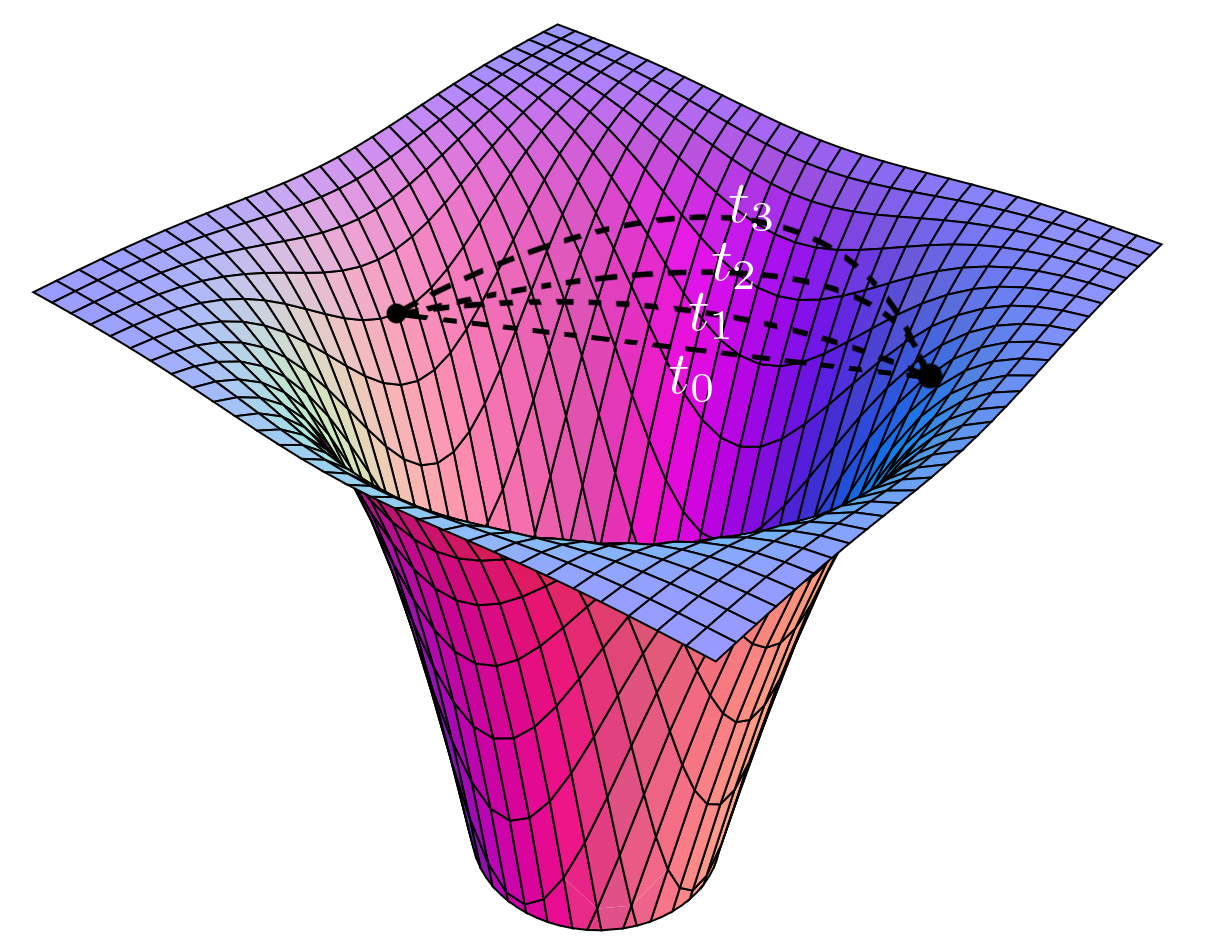}
\caption{\label{fig: curve convergence} \small Illustration of geodesic curve optimization in \Cref{alg: geodesic distance}. }
\end{figure} 

\subsection{Geodesic Distance Estimation}

In order to practically estimate the geodesic distance between two points $e_1, e_2 \in \mathcal{E}$ we defined a parametric curve in latent space and used gradient descent to minimize the curve's energy \footnote{Other methods for computing the geodesic distance include solving a system of ODEs \citep{arvanitidis2018latent}, using graph based geodesics \citep{chen2019fast}, or using neural networks \citep{chen2018metrics}.}. The resulting curve and pullback metric were then used to calculate the geodesic distance by a numerical estimate of the curve length. 

Pseudo code for Geodesic Distance Estimation is shown in \Cref{alg: geodesic distance}. Our curve was modeled as a cubic spline with $8$ coefficients. We used SGD (momentum $0.99$) to optimize the curve energy over $20$ gradient iterations with a grid of $10$ points and a learning rate of $10^{-3}$. An illustration of the convergence of such a curve is illustrated in \Cref{fig: curve convergence}

\begin{algorithm}[t!]
  \caption{Geodesic Distance Estimation}
  \label{alg: geodesic distance}
\begin{algorithmic}
  \STATE {\bfseries Input:} forward latent $F$, decoder $f_D$, learning rate $\lambda$, number of iterations $T$, grid size $n$, eval points $e_0, e_1$
  \STATE {\bfseries Initialize:} parametric curve $\gamma_\theta: \gamma_\theta(0) = e_0, \gamma_\theta(1) = e_1$ 
  \FOR{$t = 1$ {\bfseries to} $T$}
      \STATE $L_\mu(\theta) \gets \sum_{i=1}^n \mu_D(\mu_F(\gamma_\theta(\frac{i}{n}))) - \mu_D(\mu_F(\gamma_\theta(\frac{i-1}{n})))$
      \STATE $L_\sigma(\theta) \gets \sum_{i=1}^n \sigma_D(\mu_F(\gamma_\theta(\frac{i}{n}))) - \sigma_D(\mu_F(\gamma_\theta(\frac{i-1}{n})))$
      \STATE $L(\theta) \gets L_\mu(\theta) + L_\sigma(\theta)$
      \STATE $\theta \gets \theta - \lambda \nabla_\theta L(\theta)$
  \ENDFOR
  \STATE $G_{D \circ F} = J_{\mu_F}^T \overline{G}_DJ_{\mu_F}
        +
        J_{\sigma_F}^T \diag{\overline{G}_D}J_{\sigma_F}$
  \STATE $\forall i, \Delta_i \gets \gamma_\theta(\frac{i}{n}) - \gamma_\theta(\frac{i-1}{n})$
  \STATE $d(e_0, e_1) \gets \sum_{i=1}^{n} \brk*{\frac{\partial \gamma_\theta}{\partial t}\big\rvert_{\frac{i}{n}}}^T G_{D \circ F}(\gamma_\theta(\frac{i}{n})) \brk*{\frac{\partial \gamma_\theta}{\partial t}\big\rvert_{\frac{i}{n}}} \Delta_i$
  \STATE {\bfseries Return:} $d(e_0, e_1)$
\end{algorithmic}
\end{algorithm}

\subsection{RL algorithm}

\paragraph{D4RL.} Our learning algorithm is based on the Soft Learning framework proposed in Algorithm~2 of \citet{yu2020mopo}. Pseudo code is shown in \Cref{alg: gelato full}. Specifically we used two replay buffers $\mathcal{R}_{\text{model}}, \mathcal{R}_{\text{data}}$, where $\abs{R_{\text{model}}} = 50000$ and $\mathcal{R}_{\text{data}}$ contained the full offline dataset. In every epoch an initial state $s_0$ was sampled from the offline dataset and embedded using our latent model to generate $z_0 \in \Z$. During rollouts of $\pi$, embeddings $E_{z,a} \in \mathcal{E}$ were then generated from $z$ and used to (1) sample next latent state $z'$, (2) sample estimated rewards $r$, and (3) compute distances to $K=20$ nearest neighbors in embedded the dataset. 

\paragraph{Highway-Env.} We used PPO \citep{schulman2017proximal} implemented in RLlib \citep{liang2018rllib} trained over the variational forward model. All environments (except Racetrack) were trained with horizon $h=10$. The Racetrack environment was trained with horizon $h=50$.

We used \Cref{alg: geodesic distance} to compute the geodesic distances, and FAISS \citep{johnson2019billion} for efficient nearest neighbor computation on GPUs. To stabilize learning, we normalized the penalty $\frac{1}{K}\sum_{k=1}^K d_k$ according to the maximum penalty, ensuring penalty lies in $[0,1]$ (recall that the latent reward predictor was trained over normalized rewards in $[-1, 1]$). For the non-skewed version of GELATO, we used $\lambda = 1$ as our reward penalty coefficient and $\lambda=2$ for the skewed versions. We used rollout horizon of $h=5$, and did not notice significant performance improvement for different values of $h$.

\begin{algorithm}[t!]
  \caption{GELATO with Soft Learning}
  \label{alg: gelato full}
\begin{algorithmic}
  \STATE {\bfseries Input:} Reward penalty coefficient $\lambda$, rollout horizon $h$, rollout batch size $b$, training epochs $T$, number of neighbors $K$.
  \STATE Train variational latent forward model on dataset $\D_n$ by maximizing ELBO (\Cref{eq: elbo})
  \STATE Construct embedded dataset $\D_{\text{embd}} = \brk[c]*{E_i}_{i=1}^n$ using latent model to initialize KNN.
  \STATE Initialize policy $\pi$ and empty replay buffer $\mathcal{R}_{\text{model}} \gets \emptyset$.
  \FOR{epoch $ = 1$ {\bfseries to} $T$}
      \FOR{$i = 1$ {\bfseries to} $b$ (in parallel)}
        \STATE Sample state $s_1$ from $\D_n$ for the initialization of the rollout and embed using latent model to produce $z_1$.
        \FOR{$j = 1$ {\bfseries to} $h$}
            \STATE Sample an action $a_j \sim \pi(\cdot | z_j)$.
            \STATE Embed $(z_j, a_j) \to E_{z_j, a_j}$ using latent model
            \STATE Sample $z_{j+1}$ from latent forward model $F(E_{z_j, a_j})$.
            \STATE Sample $r_j$ from latent reward model $R(E_{z_j, a_j})$.
            \STATE Use \Cref{alg: geodesic distance} to compute $K$ nearest neighbors $\brk[c]*{\text{NN}^{(k)}_{z_j, a_j}}_{k=1}^K$ and their distances $\brk[c]*{d_k}_{k=1}^N$ to $E_{z_j, a_j}$.
            \STATE Compute $\tilde{r}_j = r_j - \lambda \brk*{\frac{1}{K}\sum_{k=1}^K d_k}$
            \STATE Add sample $(z_j , a_j , \tilde{r}_j , z_{j+1})$ to $\mathcal{R}_{\text{model}}$.
        \ENDFOR
      \ENDFOR
      \STATE Drawing samples from $\mathcal{R}_{\text{data}} \cup \mathcal{R}_{\text{model}}$, use SAC to update $\pi$.
  \ENDFOR
\end{algorithmic}
\end{algorithm}

\section{Visualization in Continuous Control Benchmarks}

\begin{figure*}[t!]
\captionsetup[subfigure]{labelformat=empty}
\centering
\begin{subfigure}{.4\textwidth}
    \centering
    \includegraphics[width=\textwidth]{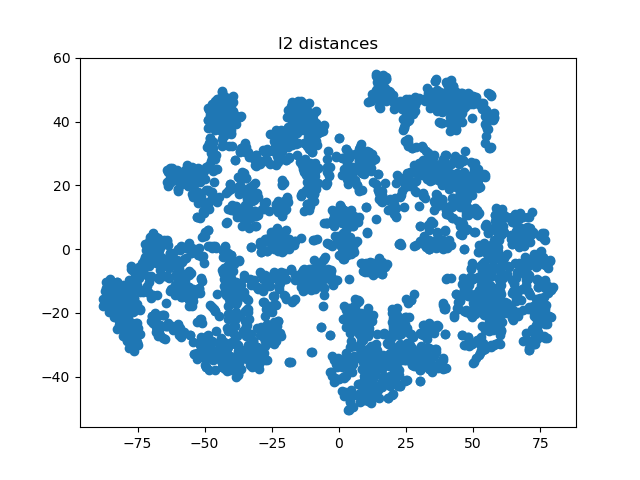}
\end{subfigure}
\begin{subfigure}{.4\textwidth}
    \centering
    \includegraphics[width=\textwidth]{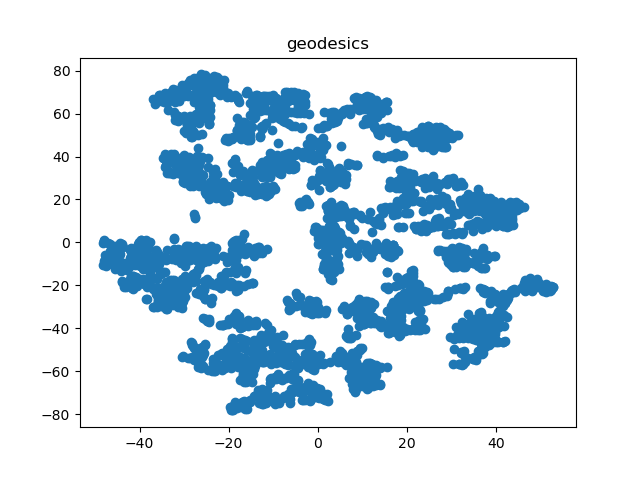}
\end{subfigure}
\caption{\label{fig: d4rl latent visualization} \small Plots show t-SNE embeddings generated using the HalfCheetah (medium) dataset. Left plot depicts embeddings using Euclidean distances. Right plot depicts embeddings using geodesics distances which induce curves of minimum energy in ambient space. Plots show first 3 episodes in the data.  }
\vskip -0.1in
\end{figure*} 

In \Cref{sec: metric visualization} we showed visualizations of our metric and compared its geodesics to $\ell_2$ distances in latent space. In \Cref{fig: d4rl latent visualization} we show similar visualization for the medium-agent Halfcheetah dataset in D4RL. Particually, it is evident that the latent state in this environment is similar under our proposed metric and the standard Euclidean distance, suggesting the problems' natural manifold is in fact flat \citep{chen2020learning}. These results give evidence to performance results of GELATO in \Cref{table: results}. Indeed, GELATO with $\ell_2$ penalty achieved similar performance to other methods in the D4RL benchmarks. Nevertheless, as was shown in \Cref{fig: highway env results}, the Euclidean distance in latent space failed catastrophically in the autonomous driving environments, due to discontinuities in latent space (as depicted in \Cref{fig: highway embeddings}.

\section{Missing Proofs}

\subsection{Proof of \Cref{prop: metric}}

For any curve $\gamma$, we have that
\begin{align*}
    \int_0^1 \norm{\frac{\partial f(\gamma(t))}{\partial t}} dt
    &=
    \int_0^1 \norm{\frac{\partial f(\gamma(t))}{\partial \gamma(t)}^T\frac{\partial \gamma(t)}{\partial t}} dt \\
    &=
    \int_0^1 \norm{J_f(\gamma(t))^T\frac{\partial \gamma(t)}{\partial t}} dt \\
    &=
    \int_0^1 \sqrt{\brk[a]*{\frac{\partial \gamma(t)}{\partial t}, J_f^T(\gamma(t)) J_f(\gamma(t)) \frac{\partial \gamma(t)}{\partial t}}} dt \\
    &=
    \int_0^1 \sqrt{\brk[a]*{\frac{\partial \gamma(t)}{\partial t}, G_f(\gamma(t))  \frac{\partial \gamma(t)}{\partial t}}} dt.
\end{align*}
This completes the proof.

\subsection{Proof of Theorems~1 and~2}

We begin by restating the theorems.

\metricdecoderprop*
\metriccompprop*

Notice that \Cref{prop: decoder metric} is a special case of \Cref{prop: metric comp} with $F$ being the trivial identity function. Additionally, a complete proof of \Cref{prop: decoder metric} can be found in \citet{arvanitidis2018latent}. We turn to prove \Cref{prop: metric comp}.

We begin by proving the following auxilary lemma.
\begin{lemma}
\label{lemma: aux hessian}
    Let $\epsilon \sim \mathcal{N}(0, I_{K})$, $f: \R^d \mapsto \R^K$, $A \in \R^{K \times K}$. Denote $S_i = \diag{\frac{\partial f^1}{\partial z_i}, \frac{\partial f^2}{\partial z_i}, \hdots, \frac{\partial f^K}{\partial z_i}}$ for $1 \leq i \leq d$ and
    \begin{align*}
        B
        = 
    \begin{bmatrix}
        S_1 \epsilon, 
        S_2 \epsilon,
        \hdots,
        S_d \epsilon
    \end{bmatrix}_{K \times d}.
    \end{align*}
    Then 
    $\E{B^TAB} = J^T_f \diag{A} J_f$.
\end{lemma}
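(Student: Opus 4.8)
The plan is to compute $\E{B^T A B}$ entrywise and reduce everything to a standard Gaussian quadratic-form identity. The $(i,j)$ entry of $B^T A B$ is $(S_i \epsilon)^T A (S_j \epsilon)$; since each $S_i$ is diagonal and hence symmetric, this equals the scalar $\epsilon^T S_i A S_j \epsilon$. Taking expectations, I would invoke the fact that for $\epsilon \sim \mathcal{N}(0, I_K)$ and any fixed $M \in \R^{K \times K}$ one has $\E{\epsilon^T M \epsilon} = \trace{M}$, which follows immediately from $\E{\epsilon_k \epsilon_\ell} = \delta_{k\ell}$. This yields $\E{(B^T A B)_{ij}} = \trace{S_i A S_j}$.

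The crux is then to evaluate $\trace{S_i A S_j}$ and observe that only the diagonal of $A$ survives. Writing $(S_i)_{kk} = \frac{\partial f^k}{\partial z_i}$, the $(k,\ell)$ entry of $S_i A S_j$ equals $\frac{\partial f^k}{\partial z_i} A_{k\ell} \frac{\partial f^\ell}{\partial z_j}$, so summing the diagonal forces $k = \ell$ and gives $\trace{S_i A S_j} = \sum_k \frac{\partial f^k}{\partial z_i} A_{kk} \frac{\partial f^k}{\partial z_j}$. Recognizing $(J_f)_{ki} = \frac{\partial f^k}{\partial z_i}$, this is precisely the $(i,j)$ entry of $J_f^T \diag{A} J_f$, where the middle factor $\diag{A}$ contributes exactly the coefficients $A_{kk}$. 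Collecting the entries over all $i,j$ establishes the claimed identity.

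There is no genuinely hard step here; the only point demanding care is explaining cleanly why the off-diagonal entries of $A$ drop out. They vanish precisely because both $S_i$ and $S_j$ are diagonal, so in $\trace{S_i A S_j}$ the coefficient $A_{k\ell}$ always appears multiplied by derivative factors indexed so that the trace retains only $k = \ell$. The remaining bookkeeping is simply aligning the index conventions of the Jacobian $J_f$ with those of the matrices $S_i$, together with the quadratic-form expectation identity noted above.
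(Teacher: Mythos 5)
Your proof is correct and follows essentially the same route as the paper's: expand $\E{B^TAB}$ entrywise, apply the Gaussian identity $\E{\epsilon^T M \epsilon} = \trace{M}$ to get entries $\trace{S_i A S_j}$, and use the diagonality of the $S_i$ to see that only the $A_{kk}$ terms survive, yielding the entries of $J_f^T \diag{A} J_f$. If anything, your write-up is slightly cleaner in making explicit why the off-diagonal entries of $A$ vanish, a point the paper leaves implicit in its trace computation.
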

\begin{proof}
    We have that
    \begin{align*}
    \E{B^TAB}
    &=
    \E{
    \begin{bmatrix}
        \epsilon^T S_1^T \\
        \epsilon^T S_2^T \\
        \vdots \\
        \epsilon^T S_d^T
    \end{bmatrix}
    A
    \begin{bmatrix}
        S_1 \epsilon, 
        S_2 \epsilon,
        \hdots,
        S_d \epsilon
    \end{bmatrix}} \\
    &=
    \begin{bmatrix}
        \E{\epsilon^TS_1^TAS_1\epsilon},
        \E{\epsilon^TS_1^TAS_2\epsilon},
        \hdots,
        \E{\epsilon^TS_1^TAS_d\epsilon}, \\
        \E{\epsilon^TS_2^TAS_1\epsilon},
        \E{\epsilon^TS_2^TAS_2\epsilon},
        \hdots,
        \E{\epsilon^TS_2^TAS_d\epsilon},\\
        \ddots \\
        \E{\epsilon^TS_d^TAS_1\epsilon},
        \E{\epsilon^TS_d^TAS_2\epsilon}
        \hdots,
        \E{\epsilon^TS_d^TAS_d\epsilon}
    \end{bmatrix}.
    \end{align*}
    
    Finally notice that for any matrix $M$
    \begin{align*}
        \E{\epsilon^TM\epsilon}
        =
        \sum_{i=1}^d \sum_{j=1}^d M_{ij}\E{\epsilon_i \epsilon_j}
        =
        \sum_{i=1}^d M_{ii}
        =
        \trace{M}.
    \end{align*}
    Then,
    \begin{align*}
        \E{B^TAB}
        &=
        \begin{bmatrix}
            \trace{S_1^TAS_1},
            \trace{S_1^TAS_2},
            \hdots,
            \trace{S_1^TAS_d} \\
            \trace{S_2^TAS_1},
            \trace{S_2^TAS_2},
            \hdots,
            \trace{S_2^TAS_d} \\
            \ddots \\
            \trace{S_d^TAS_1},
            \trace{S_d^TAS_2},
            \hdots,
            \trace{S_d^TAS_d} \\
        \end{bmatrix}.
    \end{align*}
\end{proof}
Next, note that
\begin{align*}
    \trace{S_iAS_j}
    =
    \sum_{k=1}^K \frac{\partial f^k}{\partial z_i}\frac{\partial f^k}{\partial z_j}A_{kk}.
\end{align*}
Therefore,
\begin{align*}
    \E{B^TAB}
    =
    J_f^T\diag{A}J_f.
\end{align*}

We are now ready to prove \Cref{prop: metric comp}.
\begin{proof}[Proof of \Cref{prop: metric comp}]

We can write $z' = \mu_F(z) + \sigma_F(z) \odot \epsilon_F$ and $s' = \mu_D(z') + \sigma_D(z') \odot \epsilon_D$ where $\epsilon_F \sim \mathcal{N}(0, I_d), \epsilon_D \sim \mathcal{N}(0, I_K)$, ${\mu_F:\R^d \mapsto \R^\ell,} {\mu_D : \R^\ell \mapsto \R^K}$ and $\sigma_F: \R^d \mapsto \R^\ell, \sigma_D: \R^\ell \mapsto \R^K$. 

Applying the chain rule we get
\begin{align*}
    J_{D \circ F} = \frac{\partial (D \circ F)}{\partial z} = J_{\mu_D}J_{\mu_F} + J_{\mu_D}B_{\epsilon_F} + B_{\epsilon_D}J_{\mu_F} + B_{\epsilon_D}B_{\epsilon_F}
\end{align*}
where
\begin{align*}
    B_{\epsilon_F} 
    &= 
    \begin{pmatrix}
        S_{F,1} \epsilon_F, 
        S_{F,2} \epsilon_F,
        \hdots,
        S_{F,d} \epsilon_F
    \end{pmatrix}_{d \times d}, \\
    S_{F,i} 
    &= 
    \diag{\frac{\partial \sigma_F^1}{\partial z_i}, \frac{\partial \sigma_F^2}{\partial z_i}, \hdots, \frac{\partial \sigma_F^d}{\partial z_i}}_{d \times d},
    \quad \text{       and}\\
    B_{\epsilon_D} 
    &= 
    \begin{pmatrix}
        S_{D,1} \epsilon_D, 
        S_{D,2} \epsilon_D,
        \hdots,
        S_{D,d} \epsilon_D
    \end{pmatrix}_{K \times d}, \\
    S_{D,i} 
    &= 
    \diag{\frac{\partial \sigma_D^1}{\partial z'_i}, \frac{\partial \sigma_D^2}{\partial z'_i}, \hdots, \frac{\partial \sigma_D^d}{\partial z'_i}}_{K \times K}.
\end{align*}
This yields
\begin{align*}
    \E{J_{F \circ D}^T J_{F \circ D}}
    &=
    \E{\brk[r]*{J_{\mu_D}J_{\mu_F} + J_{\mu_D}B_{\epsilon_F} + B_{\epsilon_D}J_{\mu_F} + B_{\epsilon_D}B_{\epsilon_F}}^T
    \brk[r]*{J_{\mu_D}J_{\mu_F} + J_{\mu_D}B_{\epsilon_F} + B_{\epsilon_D}J_{\mu_F} + B_{\epsilon_D}B_{\epsilon_F}}} \\
    &=
    J_{\mu_F}^T J_{\mu_D}^T J_{\mu_D} J_{\mu_F}
    +
    \E{B_{\epsilon_F}^T J_{\mu_D}^T J_{\mu_D}B_{\epsilon_F}}
    +
    \E{J_{\mu_F}^T B_{\epsilon_D}^T B_{\epsilon_D}J_{\mu_F}}
    +
    \E{B_{\epsilon_F}^T B_{\epsilon_D}^T B_{\epsilon_D}B_{\epsilon_F}} \\
    &=
    J_{\mu_F}^T
    \brk[r]*{J_{\mu_D}^T J_{\mu_D} + \E{B_{\epsilon_D}^T B_{\epsilon_D}}}
    J_{\mu_F}
    +
    \E{B_{\epsilon_F}^T
    \brk[r]*{J_{\mu_D}^T J_{\mu_D} + B_{\epsilon_D}^T B_{\epsilon_D}}
    B_{\epsilon_F}}
\end{align*}
where in the second equality we used the fact that $\epsilon_D, \epsilon_F$ are independent and $\E{B_\epsilon} = 0$. 
By \Cref{lemma: aux hessian} we have
\begin{align*}
    \E{B_{\epsilon_D}^T B_{\epsilon_D}}
    = 
    J_{\sigma_D}^TJ_{\sigma_D}.
\end{align*}
Similarly,
\begin{align*}
    \E{B_{\epsilon_F}^T B_{\epsilon_D}^T B_{\epsilon_D}B_{\epsilon_F}}
    =
    \E{\E{B_{\epsilon_F}^T B_{\epsilon_D}^T B_{\epsilon_D}B_{\epsilon_F} | \epsilon_F}}
    =
    \E{B_{\epsilon_F}^T J_{\sigma_D}^TJ_{\sigma_D} B_{\epsilon_F}}
    = 
    J_{\sigma_F}^T\diag{J_{\sigma_F}^TJ_{\sigma_F}}J_{\sigma_F}
\end{align*}
Finally,
\begin{align*}
    \E{B_{\epsilon_F}^T
    J_{\mu_D}^T J_{\mu_D} 
    B_{\epsilon_F}}
    =
    J_{\sigma_F}^T\diag{J_{\mu_D}^T J_{\mu_D}}J_{\sigma_F}.
\end{align*}
The proof is complete by taking expectation with respect to uniformly distributed set of decoders $\brk[c]*{D_i}_{i=1}^M$.
\end{proof}

\end{document}